\newtheorem{theorem}{Theorem}[section]
\newtheorem{corollary}{Corollary}[section]
\newtheorem{lemma}{Lemma}[section]
\newenvironment{proof}{\paragraph{Proof}}{\hfill$\square$}
\journal{Journal of \LaTeX\ Templates}
\begin{document}

\begin{frontmatter}

\title{Explainable Censored Learning: Finding Critical Features with Long Term Prognostic Values for Survival Prediction}

\author{Xinxing~Wu$^{a}$, Chong Peng$^{b}$, Richard Charnigo$^{c}$, and Qiang~Cheng$^{a,}$\fnref{myfootnote}}
\address{$^{a}$Institute for Biomedical Informatics, University of Kentucky, Lexington, KY 40506, USA\\
$^{b}$Department of Computer Science and Engineering, Qingdao University, Shandong 266071, China\\
$^{c}$Department of Statistics, University of Kentucky, Lexington, KY 40536, USA}
\fntext[myfootnote]{Correspondence should be addressed to: qiang.cheng@uky.edu. Address: RM 230, Multidisciplinary Science Building, KY 40506, USA. Phone number: 859-323-7238.}




\begin{abstract}
Interpreting critical variables involved in complex biological processes related to survival time can help understand prediction from survival models, evaluate treatment efficacy, and develop new therapies for patients. Currently, the predictive results of deep learning (DL)-based models are better than or as good as standard survival methods, they are often disregarded because of their lack of transparency and little interpretability, which is crucial to their adoption in clinical applications. In this paper, we introduce a novel, easily deployable approach, called EXplainable CEnsored Learning (EXCEL), to iteratively exploit critical variables and simultaneously implement (DL) model training based on these variables. First, on a toy dataset, we illustrate the principle of EXCEL; then, we mathematically analyze our proposed method, and we derive and prove tight generalization error bounds; next, on two semi-synthetic datasets, we show that EXCEL has good anti-noise ability and stability; finally, we apply EXCEL to a variety of real-world survival datasets including clinical data and genetic data, demonstrating that EXCEL can effectively identify critical features and achieve performance on par with or better than the original models. It is worth pointing out that EXCEL is flexibly deployed in existing or emerging models for explainable survival data  in the presence of right censoring. 
\end{abstract}

\begin{keyword}
Censored Learning \sep deep learning \sep feature selection \sep generalization error bound
\end{keyword}

\end{frontmatter}

\linenumbers

\section{Introduction}

Different from common prediction tasks in machine learning, survival analysis focuses on modeling censored data. The Cox proportional-hazards (CPH) model~\citep{Cox1972} is a standard regression model widely used in epidemiological studies and clinical trials, e.g., for analyzing gene expression- or clinical variable-related time-to-event survival data. Recently, deep learning (DL)-based techniques have been adapted for CPH, such as DeepSurv~\citep{Katzman2018}, Cox-nnet~\citep{Ching2018}, and PASNet~\citep{Hao2018}. Although the predictive results of these DL-based models are better than or as good as standard survival methods, they are often disregarded because of their lack of transparency and little interpretability~\citep{Gunning2019}, which is crucial to their adoption in clinical applications.
{\begin{figure*}[!bthp]
\begin{center}
\centerline{\includegraphics[width=1\textwidth]{./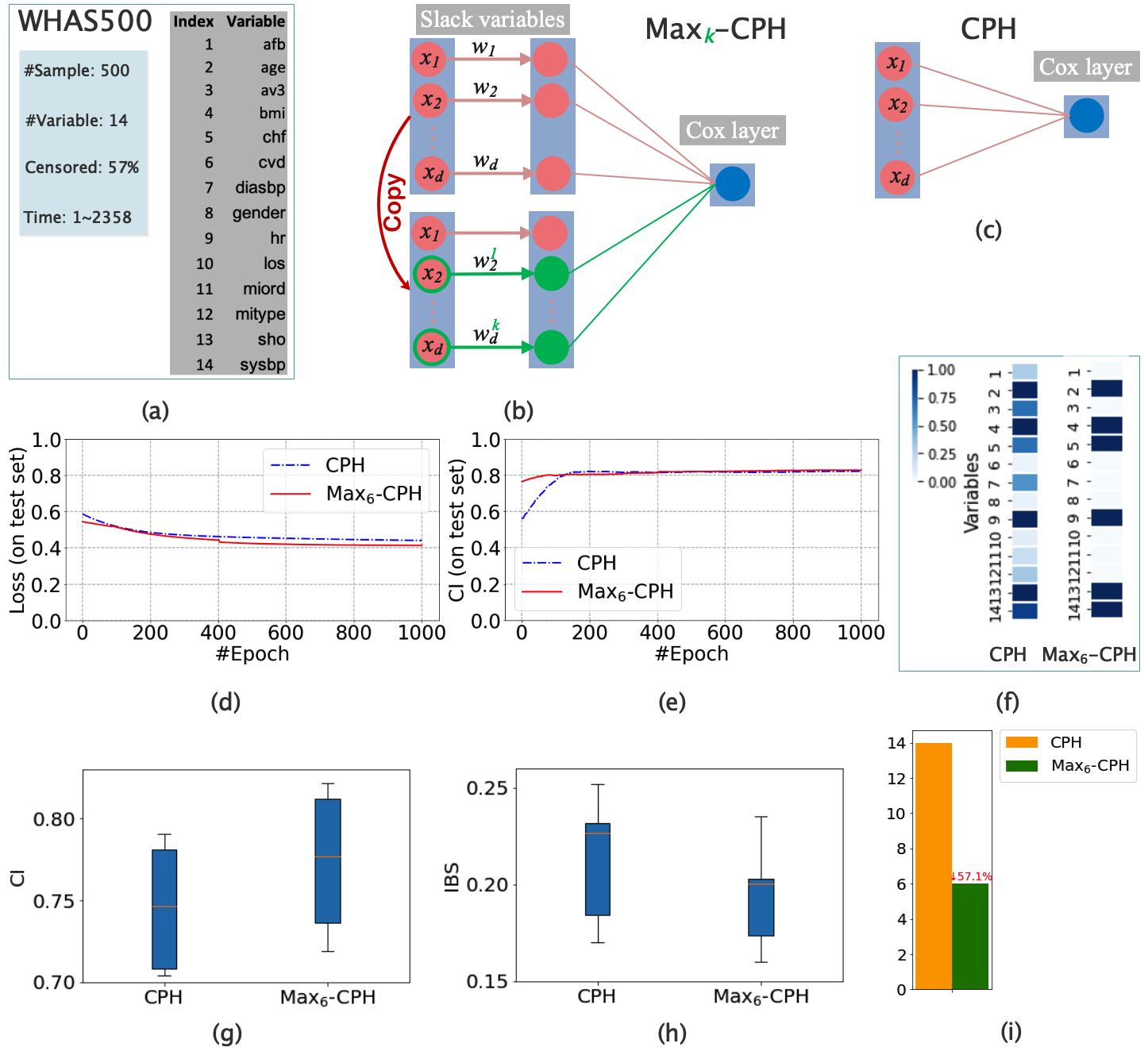}}
\end{center}
\vskip -0.35in
\caption{Schematic illustration of our EXCEL approach embedded in the standard CPH model (denoted as Max$_k$-CPH) and its comparison with the baseline model on dataset Worcester Heart Attack Study (WHAS500). (a) The statistic of dataset WHAS500. We directly import function \texttt{sksurv.datasets.load$\_$whas500()} (for more details, see https://scikit-survival.readthedocs.io/en/latest/api/generated/sksurv.data sets.load$\_$whas500.html) to download WHAS500~\citep{Hosmer2008}; (b) and (c) are the architectures of Max$_k$-CPH and the baseline, respectively; (d) and (e) are the loss and CI comparisons of Max$_k$-CPH with the baseline model with different epochs on the test set; (f) The comparison of variable selection of our method with the baseline; (g)-(h) The performance comparison of our method with the baseline model in CI and IBS with 5 random training/test set splits; (i) The comparison of the number of used variables of our method with the baseline.}
\label{Max_k_Schematic}
\vskip -0.15in
\end{figure*}}
In survival analysis, an important goal is to identify potential risk factors from thousands of variables to evaluate treatment effects~\citep{Fan2010}. Kaplan-Meier (KM) curves~\citep{Kaplan1958} and Log-rank tests~\citep{Mantel1966,Peto1972} are two typical methods for univariate factor analysis, describing survival in terms of one factor but ignoring the impact of the other variables.{\footnote{KM estimation is a non-parametric method which uses data from subjects at one level of the univariate factor to construct a non-increasing step function which approximates the survival probability in relation to time. Log-rank test is a non-parametric method which uses data from independent subjects at different levels of the univariate factor to test a null hypothesis of identical survival probabilities in relation to time.}} The CPH model, as a representative survival analysis method, can assess survival prediction based on risk variables. To identify informative variables in the CPH model, variable selection via penalization like employing an $\ell_0$-norm is statistically preferred and has attracted much attention in recent years. Because the exact $\ell_0$ optimization needs to search the space of all possible subsets, which is an NP-hard problem~\citep{Natarajan,Weston,Hamo}, a variety of convex surrogate functions for $\ell_0$-norm have been developed. The Lasso~\citep{Tibshirani1996,Chen1998} penalty, i.e., $\ell_1$-norm regularization, is a popular one; however, \citep{Fan2001} pointed out that convex surrogate functions, such as $\ell_1$-norm, generally bring non-negligible estimation biases and thus an adverse effect on the predictive ability. Lately,~\citep{Fan2021} used two convex programs to optimize the CPH model with a non-convex penalty for variable selection;~\citep{Wen2020} generalized the primal-dual active set algorithm for general convex loss functions to solve the best subset constraint problem, such as the sparse CPH model with an $\ell_0$-norm-based constraint, and developed the corresponding R package, \texttt{BeSS}. Although these existing works can select useful variables in the baseline CPH model, effective ways to identify critical factors in DL-based survival analysis is evasive; in particular, due to their being generally non-convex, it is impossible to directly apply these existing techniques for the CPH model to the DL-based models.

In this paper, we aim to find critical features with long-term prognostic values and help establish necessary explainability for the prognosis of survival time in DL-based survival analysis over censored data. To overcome the bias issue of the $\ell_1$-norm based sparse CPH model, we develop an approach for DL-based survival models by leveraging a direct counterpart of the $\ell_0$-norm regularization. Inspired by the recent development of feature selection algorithms in unsupervised learning~\citep{Wu2021, Wu2020}, we adopt an $\ell_1$-norm regularization to score all input variables globally, and then determine the $k$ variables with the maximum scores as a counterpart of $\ell_0$ norm, to iteratively revamp the learning of our target model. We dub this approach EXplainable CEnsored Learning (EXCEL). Notably, EXCEL has a high stability advantage in that its identified factors do not change much with different subsets of training examples, in sharp contrast to other methods; see Figure~\ref{NoiseandSplit} (k). The schematic of EXCEL is illustrated in Figure~\ref{Max_k_Schematic} with a toy dataset, WHAS500. The EXCEL-extended CPH model for selecting $k$ features, denoted by Max$_k$-CPH, is shown in Figure~\ref{Max_k_Schematic} (b). Compared to the standard CPH model shown in Figure~\ref{Max_k_Schematic} (c), our proposed method has one more layer of slack variables and a Max$_k$ embedded module (marked in green) as our target model. The slack variables are used to score the importance of all input survival variables globally, while the Max$_k$ module implements learning based on $k$ selected variables with the maximum scores in each iteration. Figure~\ref{Max_k_Schematic} (d) and (e) show that, on the test set of WHAS500 with $k=6$, with the increase of epoch, our method achieves a smaller loss and better concordance index (CI) than the standard CPH model. Figure~\ref{Max_k_Schematic} (f) and (g) indicate that our method performs better in CI and integrated Brier score (IBS) than the standard CPH model with 5 random training/test set splits. Figure~\ref{Max_k_Schematic} (h) shows that our model uses 57.1\% fewer variables than the standard CPH model.

Technically, our designed algorithm trains a (DL-based) survival model to identify potentially informative variables for survival learning globally; simultaneously, it leverages an embedded module to select a subset locally from the globally informative variables to examine their diversity, which is efficiently measured by their abilities to fit the survival time. In this way, the embedded module enables us to effectively screen out the critical variables for predicting the survival time. The global component turns out to play a crucial role in regularization to stabilize the variable selecting process, similar to~\citep{Wu2021}. By capitalizing on such a regularization, we find that the resulting embedded model can select representative variables and simultaneously capture the complex nonlinear relationship between these variables and survival time. In Section~{\bf Algorithm development and analysis}, we  derive and prove a tight generalization error bound for the difference between the solution of the lead model and that of the induced model, thereby providing theoretical certificates of using our proposed regularization for variable selection. Notably, our method can be deployed as a lightweight module, easily plugged in, and efficiently optimized along with the existing standard survival methods, including DeepSurv, Cox-nnet, and PASNet. In Section~{\bf Experimental results}, on two semi-synthetic datasets and three real-world survival datasets, including clinical and genetic data, with varying numbers of subjects and variables and portions of censored observations, we demonstrate that our proposed EXCEL approach can identify critically important risk factors from thousands of variables potentially related to the survival time and, with only a small subset of identified variables, it can achieve prognostic performance comparable to or better than the original models with all variables. {\footnote{For original models with all variables, we denote that the training of models directly uses all of the potentially variables of the sample, rather than a subset of variables of the input sample.}}

\section{Algorithm development and analysis}{\label{meth}}

\paragraph{Description of the algorithm} Set $[N]=\{1,2,\ldots,N\}$. Let $\mathcal{D}=\{x_i,E_i,T_i\}_{i=1}^{N}$ be a survival dataset consisting of $N$ subjects, where $x_i\in\mathbb{R}^{d\times 1}, i\in[N]$; $E_i$ is the censoring indicator: $E_i=0$ if the survival time of subject $i$ was (right) censored, and $E_i=1$ otherwise; $T_i$ is the censored survival time for subject $i$ as indicated by $E_i$. Usually, a maximum likelihood-based approach is adopted to model the distribution $S(t|x)=\mathbb{P}(T>t|x)$ from survival data~\citep{Cox1972}. And we assume all samples are bounded, i.e., $\forall x_i, i\in[N]$, $\exists C_1\geqslant 0$, such that $\sum_{i\in[N]}\|x_i\|_2\leqslant C_1$.

Here, we use the average negative log partial likelihood function as the loss function, and we give a more general form as follows:
\begin{equation}{\label{GeneralLoss}}
\displaystyle\mathcal{L}(f,\mathcal{D})\triangleq-\frac{1}{N_{E=1}}\sum\limits_{i:E_i=1}\left(f(x_i)-\log\sum_{T_j\geqslant T_i}\exp f(x_j)\right)+\lambda_1\cdot\mathrm{Reg}(f),
\end{equation}
where $N_{E=1}$ is the number of patients of nonzero $E$ (i.e., observable events), $f(\cdot)$ is a generally a non-convex function, e.g., a neural network, and $\mathrm{Reg}(f)$ represents the regularizations on $f(\cdot)$. The above model is a generalization of a number of existing survival models: 1) It reduces to  DeepSurv if $f$ is parameterized as a neural network and $f\in\mathbb{R}$. 2) It reduces to Cox-nnet if $f$ can be written as $\beta^\mathrm{T} g$, where $g$ is a neural network, $g\in\mathbb{R}^{m\times 1}$, $\beta$ denotes the weights in a standard CPH model that need to be trained, $\beta\in\mathbb{R}^{m\times 1}$, and $m$ is an integer. 3) It becomes PASNet if the first layer of $g$ can be further rewritten as $W_{g_1}\odot M$, where $\odot$ denotes element-wise multiplication and the elements of $M$ are either one or zero, determining which associated weights are dropped during training. And $W_{g_1},M\in\mathbb{R}^{d\times k_1}$, $k_1$ denotes the number of neurons in the fist layer of $g$.

Our proposed EXCEL approach introduces a sub-architecture to existing models and~\eqref{GeneralLoss} becomes
\begin{equation}{\label{Max_k_Loss}}
\begin{array}{ll}
&\displaystyle\mathcal{L}(f,W_{\mathrm{I}},\mathcal{D})\\
\displaystyle\triangleq&\displaystyle-\frac{\lambda_0}{N_{E=1}}\sum\limits_{i:E_i=1}\left(f(W_{\mathrm{I}} x_i)-\log\sum_{T_j\geqslant T_i}\exp f(W_{\mathrm{I}}x_j)\right)\\
&\displaystyle-\frac{\lambda_2}{N_{E=1}}\sum\limits_{i:E_i=1}\left(f(W_{\mathrm{I}}^{\mathrm{max}_k} x_i)-\log\sum_{T_j\geqslant T_i}\exp f(W_{\mathrm{I}}^{\mathrm{max}_k}x_j)\right)\\
&\displaystyle+\lambda_1\cdot\mathrm{Reg}(f)+\lambda_3\cdot\mathrm{Reg}(W_{\mathrm{I}}),
\end{array}
\end{equation}
where $W_{\mathrm{I}}^{\mathrm{max}_k}=$ Diag$(w^{\mathrm{max}_k}) (\in\mathbb{R}^{d\times d})$ and $w^{\mathrm{max}_k} (\in\mathbb{R}^{d\times 1})$ is an operation to keep the $k$ largest entries of $w$ while making other entries $0$. We generally take $\mathrm{Reg}(W_{\mathrm{I}})$ as $\|W_{\mathrm{I}}\|_1$. In practical computation, we constrain the elements in $W_{\mathrm{I}}$ to be non-negative, to simplify the processing of variable scores; otherwise, during optimization, we need to take the absolute value of the scores first before performing Max$_k$ operation.

The optimization of~\eqref{Max_k_Loss} is an efficient extension of the DL-based survival models of~\eqref{GeneralLoss}, where the main difference comes from the second term of~\eqref{Max_k_Loss}. In each iteration of back propagation, the second term of~\eqref{Max_k_Loss} would require the gradients of the variables corresponding to the maximal $k$ weights in magnitude during this iteration while not affecting the gradients of other variables. Note that the EXCEL-extended model in the second term essentially has the same architecture as the DL-based model, i.e.,\eqref{GeneralLoss}. Thus, the gradients of the second term can reuse (up to a multiplicative factor) those corresponding to the maximal $k$ weights from the first term, with only an additional ranking operation. Finding the maximal $k$ out of $d$ weights in each iteration has a worst-case efficiency $\mathcal{O}(d\min\{\log d, k\})$ that is independent of $N$. Therefore, the efficiency of optimizing~\eqref{Max_k_Loss} is essentially the same as~\eqref{GeneralLoss}. 

Notably, the EXCEL approach can be easily used for many existing survival models, and we illustrate the schematic of the EXCEL-extended models for DeepSurv, Cox-nnet, and PASNet in Figure~\ref{DeepSchematic}.
\begin{figure*}[!bthp]
\begin{center}
\centerline{\includegraphics[width=0.95\textwidth]{./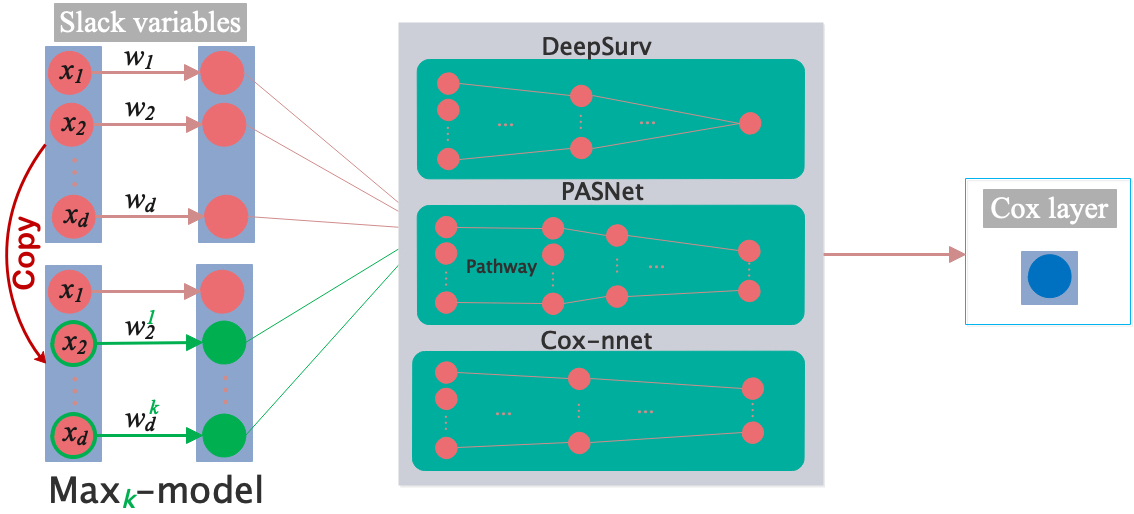}}
\end{center}
\vskip -0.3in
\caption{Schematic illustration of the EXCEL-extended models for various DL-based survival models.}
\label{DeepSchematic}
\vskip -0.2in
\end{figure*}

\paragraph{Evaluation metrics}

Two main metrics are adopted for evaluating the performance of survival models: 1) Concordance index (CI)~\citep{Harrell1982,Steck2007} is widely used to evaluate the ability of models to rank individuals by their risk for the time-to-event predictions. The larger the CI value, the better the performance. 2) Integrated Brier Score (IBS)~\citep{Graf1999} extends Brier score~\citep{Brier1950} to right-censored data for evaluating the accuracy of an estimated survival function at time $t$. The smaller the IBS value, the better the performance. In our experiments, we directly import function \texttt{sksurv.metrics.concordance$\_$index$\_$censored()} and \texttt{sksurv.metrics.integrated$\_$brier$\_$score()} for calculating CI and IBS.

\paragraph{Experimental setting}

In our experiments, we set the maximum number of epochs to be $150$ for two semi-synthetic datasets NSurvUSPS$\_$3vs8 and SurvUSPS $\_$3vs8 and $1,000$ for real-world datasets, including WHAS500, Breast cancer, SUPPORT, and GBM. We initialize the weights of variable selection layer by sampling uniformly from $\mathrm{U}[0.999999, 0.9999999]$ and the other layers with the Xavier normal initializer~\citep{Glorot2010}. We adopt the Adam optimizer~\citep{Kingma1} with an initialized learning rate of $0.0001$. For the hyper-parameter setting, we perform a grid search on the validation set, and then choose the optimal one. Taking~\eqref{Max_k_Loss} as an
example, we mainly need to optimize four hyper-parameters. On the validation set split from training set, we optimize $\lambda_0$ and $\lambda_2$ over the search space: [0.4, 0.8, 1.2, 1.6], and $\lambda_1$ and $\lambda_3$ over the search space: [0.0001, 0.0005, 0.001, 0.005, 0.01, 0.05, 0.1, 0.5].

For all datasets, we randomly split them into training and testing sets by a ratio of 80:20; the results are averaged over 10 runs of 10 different random splits for the results on the three datasets. All experiments are implemented with Python 3.7.8. The codes will be made publicly available upon acceptance.

\paragraph{Generalization error bounds}

Now, we study the error bound for the difference between the solution of the lead model and that of the induced model. For analytical tractability, we focus on shallow neural networks (NNs) with the $\ell_2$-norm regularization here and our analysis may be extended to more general NNs with more layers and we will leave the extension as a line for future study. The optimization consists of minimizing the following average negative log partial likelihood objective function:
\begin{equation}{\label{Max_k_Loss}}
\begin{array}{ll}
&\displaystyle\mathcal{L}(w,\mathcal{D})\\
=&\displaystyle-\frac{1}{N_{E=1}}\sum\limits_{i:E_i=1}\left(x_i^{\mathrm{T}} w-\log\sum_{T_j\geqslant T_i}\exp (x_j^{\mathrm{T}}w)\right)\\
&\displaystyle-\frac{\lambda_2}{N_{E=1}}\sum\limits_{i:E_i=1}\left(x_i^{\mathrm{T}} w^{\max_k}-\log\sum_{T_j\geqslant T_i}\exp (x_j^{\mathrm{T}} w^{\max_k})\right)+\frac{\lambda_3}{2}\|w\|_2^2.
\end{array}
\end{equation}
In practical computation, we constrain the elements in $w$ to be non-negative, to simplify the processing of variable scores; otherwise, during optimization, we need to take the absolute value of the scores first before performing Max$_k$ operation.

Let $\hat{w}=\arg\min_{w\in\mathbb{R}^{d\times 1}}\mathcal{L}(w,\mathcal{D})$, we have

\begin{theorem}
\label{thm:1} 
\begin{equation}
\begin{array}{ll}
&\displaystyle\|\hat{w}-\hat{w}^{\max_k}\|_2^2\\
\displaystyle\leqslant&\displaystyle\frac{\displaystyle 2\hat{w}^{\mathrm{T}}(\mathbf{I}_d-\mathbf{I}_d (k))}{\displaystyle\max{\{\lambda_2 N_{E=1},\lambda_3 N_{E=1}\}}}\sum\limits_{i:E_i=1}\left(\frac{\displaystyle\sum_{T_j\geqslant T_i}(x_i-x_j)\exp (x_j^{\mathrm{T}}\hat{w}^{\max_k})}{\sum\limits_{T_j\geqslant T_i}\exp (x_j^{\mathrm{T}}\hat{w}^{\max_k})}\right),
\end{array}
\end{equation}
where $\mathbf{I}_d$ is a $d\times d$ identity matrix, and $\mathbf{I}_d (k)$ denotes a $d\times d$ identity matrix with $k$ ones on its diagonal.
\end{theorem}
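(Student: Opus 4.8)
\emph{Plan.} The starting point is to turn the left-hand side into a quadratic form of a projection. Writing $\hat{w}^{\max_k}=\mathbf{I}_d(k)\hat{w}$, the residual $\hat{w}-\hat{w}^{\max_k}=(\mathbf{I}_d-\mathbf{I}_d(k))\hat{w}$ is supported on the complement of the top-$k$ coordinates, and since $\mathbf{I}_d-\mathbf{I}_d(k)$ is an orthogonal projection (symmetric and idempotent) we have $\|\hat{w}-\hat{w}^{\max_k}\|_2^2=\hat{w}^{\mathrm{T}}(\mathbf{I}_d-\mathbf{I}_d(k))\hat{w}$ together with the Pythagorean identity $\|\hat{w}\|_2^2=\|\hat{w}^{\max_k}\|_2^2+\|\hat{w}-\hat{w}^{\max_k}\|_2^2$. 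I would also record that the negative log partial likelihood $\ell(v)\triangleq-\sum_{i:E_i=1}\big(x_i^{\mathrm{T}}v-\log\sum_{T_j\geqslant T_i}\exp(x_j^{\mathrm{T}}v)\big)$ is convex in $v$ (a sum of affine terms and log-sum-exp functions), with gradient $\nabla\ell(v)=-\sum_{i:E_i=1}g_i(v)$ where $g_i(v)=\big(\sum_{T_j\geqslant T_i}(x_i-x_j)\exp(x_j^{\mathrm{T}}v)\big)/\big(\sum_{T_j\geqslant T_i}\exp(x_j^{\mathrm{T}}v)\big)$ --- precisely the vector appearing on the right-hand side of the claim.

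\emph{Core inequality.} Next I would exploit that $\hat{w}$ is a global minimizer by comparing it against the competitor $\hat{w}^{\max_k}$, i.e.\ $\mathcal{L}(\hat{w},\mathcal{D})\leqslant\mathcal{L}(\hat{w}^{\max_k},\mathcal{D})$. Because applying the Max$_k$ operation to $\hat{w}^{\max_k}$ returns $\hat{w}^{\max_k}$ itself (its only nonzero entries are already the $k$ largest, using the non-negativity convention), the two Max$_k$ partial-likelihood terms coincide on both sides and cancel, leaving $\tfrac1{N_{E=1}}\ell(\hat{w})+\tfrac{\lambda_3}{2}\|\hat{w}\|_2^2\leqslant\tfrac1{N_{E=1}}\ell(\hat{w}^{\max_k})+\tfrac{\lambda_3}{2}\|\hat{w}^{\max_k}\|_2^2$. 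Substituting the Pythagorean identity converts the quadratic gap into $-\tfrac{\lambda_3}{2}\|\hat{w}-\hat{w}^{\max_k}\|_2^2$, so that $\tfrac{\lambda_3}{2}\|\hat{w}-\hat{w}^{\max_k}\|_2^2\leqslant\tfrac1{N_{E=1}}\big(\ell(\hat{w}^{\max_k})-\ell(\hat{w})\big)$. I would then bound the loss gap from above by the first-order convexity inequality anchored at $\hat{w}^{\max_k}$, namely $\ell(\hat{w}^{\max_k})-\ell(\hat{w})\leqslant\nabla\ell(\hat{w}^{\max_k})^{\mathrm{T}}(\hat{w}^{\max_k}-\hat{w})=\hat{w}^{\mathrm{T}}(\mathbf{I}_d-\mathbf{I}_d(k))\sum_{i:E_i=1}g_i(\hat{w}^{\max_k})$, where the last equality uses $\hat{w}^{\max_k}-\hat{w}=-(\mathbf{I}_d-\mathbf{I}_d(k))\hat{w}$ and symmetry of the projection. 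Combining the two displays yields exactly the claimed bound with $\lambda_3 N_{E=1}$ in the denominator.

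\emph{From $\lambda_3$ to the maximum, and the main obstacle.} The stated denominator $\max\{\lambda_2 N_{E=1},\lambda_3 N_{E=1}\}$ is then obtained by running an analogous comparison a second time, now letting the Max$_k$ (second) partial-likelihood term rather than the $\ell_2$ regularizer supply the curvature in the complementary subspace; this produces the identical numerator over $\lambda_2 N_{E=1}$, and keeping the tighter of the two estimates (equivalently, the larger denominator) gives the maximum. The step I expect to be most delicate is the non-smoothness of the Max$_k$ map: the cancellation and the convexity inequality are only legitimate once one fixes the index set of the top-$k$ coordinates and linearizes $w\mapsto w^{\max_k}$ as $w\mapsto\mathbf{I}_d(k)w$ on a neighborhood where this index set is constant. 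I would therefore first argue that at $\hat{w}$ the top-$k$ set is locally stable --- generic separation of the $k$-th and $(k+1)$-th scores, made plausible by the non-negativity constraint and strict positivity of the retained scores --- so that all gradient and comparison manipulations above are justified; carefully bookkeeping which regularizer controls the curvature in the projected subspace is the second point requiring attention.
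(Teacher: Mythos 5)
Your core argument is sound as far as it goes, and it is a genuinely different route from the paper's. The paper anchors at $\hat{w}^{\max_k}$, uses the first-order condition $\nabla\mathcal{L}(\hat{w})=0$ to rewrite $\nabla\mathcal{L}(\hat{w}^{\max_k})$, and then invokes $\mu$-strong convexity of the full objective with $\mu=\max\{\lambda_2,\lambda_3\}$. You instead compare objective values at the two points, cancel the Max$_k$ terms via $(\hat{w}^{\max_k})^{\max_k}=\hat{w}^{\max_k}$, convert the ridge gap into $\tfrac{\lambda_3}{2}\|\hat{w}-\hat{w}^{\max_k}\|_2^2$ through the Pythagorean identity, and bound the partial-likelihood gap by first-order convexity of $\ell$ --- which, unlike the full objective containing the generally non-convex composition $w\mapsto\ell(w^{\max_k})$, is genuinely convex. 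This cleanly yields the claimed inequality with $\lambda_3 N_{E=1}$ in the denominator, arguably on firmer footing than the paper's own strong-convexity assertion.

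However, the final step --- upgrading $\lambda_3 N_{E=1}$ to $\max\{\lambda_2 N_{E=1},\lambda_3 N_{E=1}\}$ --- is a genuine gap, and the ``analogous comparison'' you sketch cannot close it. The vector $\hat{w}-\hat{w}^{\max_k}$ lies entirely in the complement of the top-$k$ coordinates, and along the whole segment $\{\hat{w}^{\max_k}+s(\hat{w}-\hat{w}^{\max_k}): s\in[0,1]\}$ the retained coordinates dominate the interpolated ones (by non-negativity and $s\leqslant 1$), so $w^{\max_k}$ is constant, equal to $\hat{w}^{\max_k}$, on this segment. This is exactly the index-set stability you invoke to linearize Max$_k$, but it cuts against you here: it means the term $\tfrac{\lambda_2}{N_{E=1}}\ell(w^{\max_k})$ is flat, with zero curvature, along every direction relevant to the bound. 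It cannot ``supply the curvature in the complementary subspace,'' and no choice of anchor point will make it produce $\lambda_2 N_{E=1}$ in the denominator. Consequently your argument proves the theorem only when $\lambda_3\geqslant\lambda_2$; when $\lambda_2>\lambda_3$ it establishes a strictly weaker bound than claimed. (The paper's proof buries the same difficulty in its unproven claim that $\mathcal{L}$ is $\max\{\lambda_2,\lambda_3\}$-strongly convex --- only the ridge term supplies uniform curvature in those directions --- but that does not repair your derivation of the stated constant.)
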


\begin{proof}
Let $\hat{w}^{\max_k}=\mathbf{I}_d (k)\hat{w}$, and the gradient of the original loss function $\nabla\mathcal{L}$ computed at $\hat{w}^{\max_k}$ is as follows:
\begin{equation}
\begin{array}{lll}
&&\displaystyle\nabla\mathcal{L}(\hat{w}^{\max_k})\\
&=&\displaystyle-\frac{1}{N_{E=1}}\sum_{i:E_i=1}\left(x_i-\frac{\displaystyle\sum_{T_j\geqslant T_i}x_j\exp (x_j^{\mathrm{T}}\hat{w}^{\max_k})}{\displaystyle\sum_{T_j\geqslant T_i}\exp (x_j^{\mathrm{T}}\hat{w}^{\max_k})}\right)\\
&&\displaystyle-\frac{\lambda_2}{N_{E=1}}\sum_{i:E_i=1}\left(\mathbf{I}_d (k)x_i-\frac{\displaystyle\sum_{T_j\geqslant T_i}\mathbf{I}_d (k)x_j\exp (x_j^{\mathrm{T}} \hat{w}^{\max_k})}{\displaystyle\sum_{T_j\geqslant T_i}\exp (x_j^{\mathrm{T}} \hat{w}^{\max_k})}\right)+\lambda_3\hat{w}^{\max_k}.
\end{array}
\end{equation}

Note that $\hat{w}$ is the the optimal solution to~\eqref{Max_k_Loss}, so we have
\begin{equation}
\displaystyle\nabla\mathcal{L}(\hat{w})=0.
\end{equation}

Then, we get
\[
\begin{array}{lll}
&&\displaystyle\nabla\mathcal{L}(\hat{w}^{\max_k})\\
&=&\displaystyle\nabla\mathcal{L}(\hat{w}^{\max_k})-\mathbf{I}_d (k)\nabla\mathcal{L}(\hat{w})\\
&=&\displaystyle-\frac{1}{N_{E=1}}\sum_{i:E_i=1}\left(x_i-\frac{\displaystyle\sum_{T_j\geqslant T_i}x_j\exp (x_j^{\mathrm{T}}\hat{w}^{\max_k})}{\displaystyle\sum_{T_j\geqslant T_i}\exp (x_j^{\mathrm{T}}\hat{w}^{\max_k})}\right)
\end{array}
\]

\begin{equation}{\label{L}}
\begin{array}{lll}
&&\displaystyle-\frac{\lambda_2}{N_{E=1}}\sum_{i:E_i=1}\left(\mathbf{I}_d (k)x_i-\frac{\displaystyle\sum_{T_j\geqslant T_i}\mathbf{I}_d (k)x_j\exp (x_j^{\mathrm{T}} \hat{w}^{\max_k})}{\displaystyle\sum_{T_j\geqslant T_i}\exp (x_j^{\mathrm{T}} \hat{w}^{\max_k})}\right)\\
&&\displaystyle+\lambda_3\hat{w}^{\max_k}-\lambda_3\mathbf{I}_d (k)\hat{w}+\frac{\mathbf{I}_d (k)}{N_{E=1}}\sum_{i:E_i=1}\left(x_i-\frac{\displaystyle\sum_{T_j\geqslant T_i}x_j\exp (x_j^{\mathrm{T}}\hat{w})}{\displaystyle\sum_{T_j\geqslant T_i}\exp (x_j^{\mathrm{T}}\hat{w})}\right)\\
&&\displaystyle+\frac{\lambda_2 \mathbf{I}_d (k)}{N_{E=1}}\sum_{i:E_i=1}\left(\mathbf{I}_d (k)x_i-\frac{\displaystyle\sum_{T_j\geqslant T_i}\mathbf{I}_d (k)x_j\exp (x_j^{\mathrm{T}} \mathbf{I}_d (k)\hat{w})}{\displaystyle\sum_{T_j\geqslant T_i}\exp (x_j^{\mathrm{T}} \mathbf{I}_d (k)\hat{w})}\right)\\
&=&\displaystyle-\frac{1}{N_{E=1}}\sum_{i:E_i=1}\left(x_i-\frac{\displaystyle\sum_{T_j\geqslant T_i}x_j\exp (x_j^{\mathrm{T}}\hat{w}^{\max_k})}{\displaystyle\sum_{T_j\geqslant T_i}\exp (x_j^{\mathrm{T}}\hat{w}^{\max_k})}\right)\\
&&\displaystyle+\frac{1}{N_{E=1}}\sum_{i:E_i=1}\left(\mathbf{I}_d (k)x_i-\frac{\displaystyle\sum_{T_j\geqslant T_i}\mathbf{I}_d (k)x_j\exp (x_j^{\mathrm{T}}\hat{w})}{\displaystyle\sum_{T_j\geqslant T_i}\exp (x_j^{\mathrm{T}}\hat{w})}\right).
\end{array}
\end{equation}

Let
\begin{equation}{\label{theta}}
\displaystyle\theta=\frac{\displaystyle\hat{w}^{\max_k}-\hat{w}}{\displaystyle\|\hat{w}^{\max_k}-\hat{w}\|_2}.
\end{equation}

Notice that $\mathcal{L}$ is $\mu$-strongly convex, thus, $\exists\mu>0$, $\forall t>0$, we obtain
\begin{equation}
\displaystyle\mathcal{L}(\hat{w}^{\max_k}-t\theta)-\mathcal{L}(\hat{w}^{\max_k})\geqslant -t \theta^{\mathrm{T}}\nabla\mathcal{L}(\hat{w}^{\max_k})+\frac{\mu t^2}{2}.
\end{equation}

Noting that $\hat{w}$ is the optimal solution of $\mathcal{L}(\hat{w})$, we have 
\begin{equation}
\displaystyle-\theta^{\mathrm{T}}\nabla\mathcal{L}(\hat{w}^{\max_k})+\frac{\mu t}{2}\leqslant 0.
\end{equation}

Then we get
\begin{equation}{\label{t}}
\displaystyle\|\hat{w}^{\max_k}-\hat{w}\|_2\leqslant t\leqslant \frac{2}{\mu}\theta^{\mathrm{T}}\nabla\mathcal{L}(\hat{w}^{\max_k}).
\end{equation}

Pluging~\eqref{L} and~\eqref{theta} into~\eqref{t}, we have
\begin{equation}
\begin{array}{lll}
&&\displaystyle\|\hat{w}^{\max_k}-\hat{w}\|_2^2\\
&\leqslant&\displaystyle-\frac{\displaystyle 2(\hat{w}^{\max_k}-\hat{w})^{\mathrm{T}}}{\displaystyle\mu N_{E=1}}\sum_{i:E_i=1}\left(x_i-\frac{\displaystyle\sum_{T_j\geqslant T_i}x_j\exp (x_j^{\mathrm{T}}\hat{w}^{\max_k})}{\displaystyle\sum_{T_j\geqslant T_i}\exp (x_j^{\mathrm{T}}\hat{w}^{\max_k})}\right)\\
&&\displaystyle+\frac{\displaystyle 2(\hat{w}^{\max_k}-\hat{w})^{\mathrm{T}}}{\displaystyle\mu N_{E=1}} \sum_{i:E_i=1}\left(\mathbf{I}_d (k)x_i-\frac{\displaystyle\sum_{T_j\geqslant T_i}\mathbf{I}_d (k)x_j\exp (x_j^{\mathrm{T}}\hat{w})}{\displaystyle\sum_{T_j\geqslant T_i}\exp (x_j^{\mathrm{T}}\hat{w})}\right)\\
&=&\displaystyle\frac{\displaystyle 2\hat{w}^{\mathrm{T}}(\mathbf{I}_d-\mathbf{I}_d (k))}{\displaystyle\mu N_{E=1}}\sum_{i:E_i=1}\left(x_i-\frac{\displaystyle\sum_{T_j\geqslant T_i}x_j\exp (x_j^{\mathrm{T}}\hat{w}^{\max_k})}{\displaystyle\sum_{T_j\geqslant T_i}\exp (x_j^{\mathrm{T}}\hat{w}^{\max_k})}\right).
\end{array}
\end{equation}
Taking $\mu=\max{\{\lambda_2,\lambda_3\}}$, the proof follows.
\end{proof}

\begin{lemma}
\label{lem:1} 
$\mathcal{L}(\hat{w})$ has a Lipschitz continuous gradient with Lipschitz constant $L$.
\end{lemma}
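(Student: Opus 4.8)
The plan is to read the statement as the assertion that $\mathcal{L}(\cdot)$, viewed as a function of $w$, is $L$-smooth, i.e. $\|\nabla\mathcal{L}(w)-\nabla\mathcal{L}(w')\|_2\le L\|w-w'\|_2$ for all $w,w'$, which is equivalent to bounding the spectral norm of the Hessian wherever it exists. I would split $\mathcal{L}$ into three pieces: the lead negative log partial likelihood term, the Max$_k$ term (the one involving $w^{\max_k}$), and the ridge penalty $\tfrac{\lambda_3}{2}\|w\|_2^2$. The ridge penalty is immediate, since its Hessian is $\lambda_3\mathbf{I}_d$ and it contributes exactly $\lambda_3$ to the Lipschitz constant.

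For the lead term, the affine part $x_i^{\mathrm{T}}w$ has constant gradient and contributes nothing, so the only work is on the log-sum-exp part $g_i(w)=\log\sum_{T_j\ge T_i}\exp(x_j^{\mathrm{T}}w)$. I would compute $\nabla^2 g_i(w)=\sum_{T_j\ge T_i}p_j x_j x_j^{\mathrm{T}}-\big(\sum_{T_j\ge T_i}p_j x_j\big)\big(\sum_{T_j\ge T_i}p_j x_j\big)^{\mathrm{T}}$, where $p_j=\exp(x_j^{\mathrm{T}}w)/\sum_{T_\ell\ge T_i}\exp(x_\ell^{\mathrm{T}}w)$ are the softmax weights over the risk set. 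The key observation is that this is a covariance matrix of the risk-set covariates under the distribution $\{p_j\}$, hence positive semidefinite with $\|\nabla^2 g_i(w)\|_2\le\max_{T_j\ge T_i}\|x_j\|_2^2$; under the standing assumption $\sum_{i\in[N]}\|x_i\|_2\le C_1$ each $\|x_j\|_2\le C_1$, so this is at most $C_1^2$, and averaging over $i:E_i=1$ preserves the bound $C_1^2$.

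The Max$_k$ term requires the only genuinely delicate step, and I expect it to be the main obstacle. On a region where the set of the $k$ largest coordinates of $w$ is fixed, the operator $w\mapsto w^{\max_k}$ coincides with multiplication by a fixed coordinate projection $\mathbf{I}_d(k)$, so the term reads $-\tfrac{\lambda_2}{N_{E=1}}\sum_{i:E_i=1}\big(x_i^{\mathrm{T}}\mathbf{I}_d(k)w-g_i(\mathbf{I}_d(k)w)\big)$, whose Hessian is $\tfrac{\lambda_2}{N_{E=1}}\sum_{i:E_i=1}\mathbf{I}_d(k)\nabla^2 g_i(\mathbf{I}_d(k)w)\mathbf{I}_d(k)$; since $\|\mathbf{I}_d(k)\|_2=1$ the same covariance estimate gives spectral norm at most $\lambda_2 C_1^2$. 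The subtlety is that $w\mapsto w^{\max_k}$ is only piecewise linear and can jump where the top-$k$ ranking is tied, so $\mathcal{L}$ is not globally $C^1$. I would therefore either restrict the claim to a neighborhood on which the active set is constant (which is all that the proof of Theorem~\ref{thm:1} in fact uses, where $\mathbf{I}_d(k)$ is the projection determined at $\hat{w}$), or invoke the non-negativity constraint on $w$ together with an assumed strict top-$k$ margin to rule out ties.

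Finally I would combine the three contributions additively to obtain $L=(1+\lambda_2)C_1^2+\lambda_3$, equivalently $(1+\lambda_2)\max_{j}\|x_j\|_2^2+\lambda_3$, and conclude by the standard mean-value argument: writing $\nabla\mathcal{L}(w)-\nabla\mathcal{L}(w')=\int_0^1\nabla^2\mathcal{L}\big(w'+s(w-w')\big)(w-w')\,ds$ and taking norms under $\|\nabla^2\mathcal{L}\|_2\le L$ yields $\|\nabla\mathcal{L}(w)-\nabla\mathcal{L}(w')\|_2\le L\|w-w'\|_2$, as claimed.
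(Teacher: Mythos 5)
Your proposal follows essentially the same route as the paper's proof: bound the spectral norm of the Hessian of each piece (the log partial likelihood term, the Max$_k$ term with the active coordinate set frozen so that $w\mapsto w^{\max_k}$ acts as the fixed projection $\mathbf{I}_d(k)$, and the ridge penalty) and conclude gradient-Lipschitz continuity by the standard mean-value argument. Two refinements in your version are worth noting: your covariance-matrix observation gives the correct constant $(1+\lambda_2)\max_j\|x_j\|_2^2+\lambda_3$, whereas the paper states $(1+\lambda_2)\max_j\|x_j^{\mathrm{T}}\|_2+\lambda_3$ without the square (and correspondingly uses $C_1$ rather than $C_1^2$ in Theorem~\ref{thm:2}), and you explicitly flag the nondifferentiability of $w\mapsto w^{\max_k}$ at ties in the top-$k$ ranking, a point the paper passes over silently.
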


\begin{proof}
We compute the Hessian matrix of $\mathcal{L}(\hat{w})$ in~\eqref{Max_k_Loss} as follows:
\begin{equation}
\begin{array}{lll}
&&\displaystyle\nabla^2\mathcal{L}(\hat{w})\\
&=&\displaystyle-\partial\left[\frac{1}{N_{E=1}}\sum_{i:E_i=1}\left(x_i-\frac{\displaystyle\sum_{T_j\geqslant T_i}x_j\exp (x_j^{\mathrm{T}}\hat{w})}{\displaystyle\sum_{T_j\geqslant T_i}\exp (x_j^{\mathrm{T}}\hat{w})}\right)\right]/\partial\hat{w}\\
&&\displaystyle-\partial\left[\frac{\lambda_2}{N_{E=1}}\sum_{i:E_i=1}\left(\mathbf{I}_d (k)x_i-\frac{\displaystyle\sum_{T_j\geqslant T_i}\mathbf{I}_d (k)x_j\exp (x_j^{\mathrm{T}}\mathbf{I}_d (k)\hat{w})}{\displaystyle\sum_{T_j\geqslant T_i}\exp (x_j^{\mathrm{T}}\mathbf{I}_d (k)\hat{w})}\right)\right]/\partial\hat{w}\\
&&\displaystyle+\lambda_3.
\end{array}
\end{equation}

Then, by Minkowski’s inequality~\citep{Erhan2011}, we easily get
\begin{equation}
\begin{array}{lll}
&&\displaystyle\|\nabla^2\mathcal{L}(\hat{w})\|_2\\
&\leqslant&\displaystyle\left\|\frac{1}{N_{E=1}}\sum_{i:E_i=1}\left(\frac{\displaystyle\sum_{T_j\geqslant T_i}x_j^{\mathrm{T}} x_j\exp (x_j^{\mathrm{T}}\hat{w})}{\displaystyle\sum_{T_j\geqslant T_i}\exp (x_j^{\mathrm{T}}\hat{w})}\right)\right.\\
&&-\displaystyle\left.\frac{1}{N_{E=1}}\sum_{i:E_i=1}\left(\frac{\displaystyle\left(\sum_{T_j\geqslant T_i}x_j^{\mathrm{T}} \exp (x_j^{\mathrm{T}}\hat{w})\right)\left(\sum_{T_j\geqslant T_i}x_j\exp (x_j^{\mathrm{T}}\hat{w})\right)}{\displaystyle\left(\sum_{T_j\geqslant T_i}\exp (x_j^{\mathrm{T}}\hat{w})\right)^2}\right)\right\|_2\\
&&\displaystyle+\left\|\frac{\lambda_2}{N_{E=1}}\sum_{i:E_i=1}\left(\frac{\displaystyle\sum_{T_j\geqslant T_i}x_j^{\mathrm{T}} x_j\exp (x_j^{\mathrm{T}}\hat{w}^{\max_k})}{\displaystyle\sum_{T_j\geqslant T_i}\exp (x_j^{\mathrm{T}}\hat{w}^{\max_k})}\right)-\frac{\lambda_2}{N_{E=1}}\sum_{i:E_i=1}\right.\\
&&\displaystyle\left.\left(\frac{\displaystyle\left(\sum_{T_j\geqslant T_i}x_j^{\mathrm{T}} \mathbf{I}_d (k) \exp (x_j^{\mathrm{T}}\hat{w}^{\max_k})\right)\left(\sum_{T_j\geqslant T_i}\mathbf{I}_d (k) x_j\exp (x_j^{\mathrm{T}}\hat{w}^{\max_k})\right)}{\displaystyle\left(\sum_{T_j\geqslant T_i}\exp (x_j^{\mathrm{T}}\hat{w}^{\max_k})\right)^2}\right)\right\|_2\\
&&\displaystyle+\lambda_3\\
&\displaystyle\leqslant&\displaystyle(1+\lambda_2)\max_{T_j\geqslant T_i}\|x_j^{\mathrm{T}}\|_2+\lambda_3
\end{array}
\end{equation}

Take $L=(1+\lambda_2)\max_{T_j\geqslant T_i}\{\|x_j^{\mathrm{T}}\|_2\}+\lambda_3$, the proof follows.
\end{proof}

\begin{lemma}[\cite{Nesterov2018}]
\label{lem:2} 
If a function $f(\cdot)$ is differentiable on $\mathbb{R}^{d\times 1}$ and has an $L$-Lipschitz gradient, then $\forall\mathbf{x},\mathbf{y}\in\mathbb{R}^{d\times 1}$, we have
\begin{equation}
\displaystyle|f(\mathbf{y})-f(\mathbf{x})-(\mathbf{y}-\mathbf{x})^{\mathrm{T}}\nabla f(\mathbf{x})|\leqslant\frac{L}{2}\|\mathbf{y}-\mathbf{x}\|_2^2.
\end{equation}
\end{lemma}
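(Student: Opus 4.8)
The plan is to reduce the multivariate inequality to a one-dimensional integral along the segment joining $\mathbf{x}$ and $\mathbf{y}$, and then control the integrand using the $L$-Lipschitz hypothesis on $\nabla f$. First I would introduce the scalar auxiliary function $g(t)=f(\mathbf{x}+t(\mathbf{y}-\mathbf{x}))$ for $t\in[0,1]$. By the differentiability of $f$ and the chain rule, $g$ is differentiable with $g'(t)=(\mathbf{y}-\mathbf{x})^{\mathrm{T}}\nabla f(\mathbf{x}+t(\mathbf{y}-\mathbf{x}))$, and since $g(1)=f(\mathbf{y})$ and $g(0)=f(\mathbf{x})$, the fundamental theorem of calculus gives $f(\mathbf{y})-f(\mathbf{x})=\int_0^1 g'(t)\,\mathrm{d}t$.

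Next I would isolate the linear term. Writing $(\mathbf{y}-\mathbf{x})^{\mathrm{T}}\nabla f(\mathbf{x})=\int_0^1 (\mathbf{y}-\mathbf{x})^{\mathrm{T}}\nabla f(\mathbf{x})\,\mathrm{d}t$ and subtracting yields $f(\mathbf{y})-f(\mathbf{x})-(\mathbf{y}-\mathbf{x})^{\mathrm{T}}\nabla f(\mathbf{x})=\int_0^1 (\mathbf{y}-\mathbf{x})^{\mathrm{T}}\bigl(\nabla f(\mathbf{x}+t(\mathbf{y}-\mathbf{x}))-\nabla f(\mathbf{x})\bigr)\,\mathrm{d}t$. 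Taking absolute values, passing them inside the integral, and applying the Cauchy--Schwarz inequality to the integrand pointwise in $t$ bounds the left-hand side by $\int_0^1 \|\mathbf{y}-\mathbf{x}\|_2\,\bigl\|\nabla f(\mathbf{x}+t(\mathbf{y}-\mathbf{x}))-\nabla f(\mathbf{x})\bigr\|_2\,\mathrm{d}t$. Finally, the $L$-Lipschitz property of $\nabla f$ replaces the gradient difference by $L\,\|t(\mathbf{y}-\mathbf{x})\|_2=Lt\,\|\mathbf{y}-\mathbf{x}\|_2$, so the bound collapses to $L\,\|\mathbf{y}-\mathbf{x}\|_2^2\int_0^1 t\,\mathrm{d}t=\frac{L}{2}\|\mathbf{y}-\mathbf{x}\|_2^2$, which is exactly the asserted inequality.

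Because this is a classical descent-type estimate, I do not expect a substantive obstacle. The only points requiring care are the rigorous justification of the chain-rule computation of $g'$ and the integrability of $t\mapsto\nabla f(\mathbf{x}+t(\mathbf{y}-\mathbf{x}))$ (which follows from the continuity of $\nabla f$ implied by its Lipschitz continuity), together with the order of operations in the middle step: the Cauchy--Schwarz bound must be applied to the integrand for each fixed $t$ \emph{before} integrating, rather than to the already-integrated quantity. Since the statement is quoted from~\cite{Nesterov2018}, one could alternatively just invoke that reference, but the self-contained argument above is short enough to include directly.
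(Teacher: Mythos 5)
Your proof is correct and complete: the segment parametrization $g(t)=f(\mathbf{x}+t(\mathbf{y}-\mathbf{x}))$, the fundamental theorem of calculus, pointwise Cauchy--Schwarz on the integrand, and the Lipschitz bound $\|\nabla f(\mathbf{x}+t(\mathbf{y}-\mathbf{x}))-\nabla f(\mathbf{x})\|_2\leqslant Lt\|\mathbf{y}-\mathbf{x}\|_2$ followed by integrating $\int_0^1 t\,\mathrm{d}t=\tfrac12$ yields exactly the claimed inequality. The paper offers no proof of this lemma at all --- it is invoked as a known result from \cite{Nesterov2018} --- and your argument is precisely the standard proof appearing in that reference, so the two are in full agreement.
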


\begin{lemma}
\label{lem:3} 
\begin{equation}
\displaystyle\frac{\theta^{\mathrm{T}}\nabla\mathcal{L}(\hat{w}^{\max_k})}{L}\leqslant\|\hat{w}-\hat{w}^{\max_k}\|_2.
\end{equation}
\end{lemma}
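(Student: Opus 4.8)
The plan is to derive the stated inequality directly from the $L$-Lipschitz continuity of $\nabla\mathcal{L}$ established in Lemma~\ref{lem:1}, the first-order optimality condition for $\hat{w}$, and the Cauchy--Schwarz inequality. The observation driving everything is that the left-hand side is just the projection of $\nabla\mathcal{L}(\hat{w}^{\max_k})$ onto the unit vector $\theta$, so once $\|\nabla\mathcal{L}(\hat{w}^{\max_k})\|_2$ is controlled by $\|\hat{w}-\hat{w}^{\max_k}\|_2$ we are essentially done.

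First I would record that, since $\hat{w}=\arg\min_{w}\mathcal{L}(w,\mathcal{D})$ is an unconstrained minimizer of a differentiable objective, the gradient vanishes there: $\nabla\mathcal{L}(\hat{w})=0$. Next, invoking the $L$-Lipschitz property of the gradient from Lemma~\ref{lem:1}, I would write $\|\nabla\mathcal{L}(\hat{w}^{\max_k})\|_2 = \|\nabla\mathcal{L}(\hat{w}^{\max_k})-\nabla\mathcal{L}(\hat{w})\|_2 \leqslant L\|\hat{w}^{\max_k}-\hat{w}\|_2$. This is the central intermediate bound.

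The final step uses $\|\theta\|_2=1$, which holds by the definition~\eqref{theta}. By Cauchy--Schwarz, $\theta^{\mathrm{T}}\nabla\mathcal{L}(\hat{w}^{\max_k}) \leqslant \|\theta\|_2\,\|\nabla\mathcal{L}(\hat{w}^{\max_k})\|_2 = \|\nabla\mathcal{L}(\hat{w}^{\max_k})\|_2 \leqslant L\|\hat{w}-\hat{w}^{\max_k}\|_2$, and dividing through by $L>0$ yields the claim. When $\theta^{\mathrm{T}}\nabla\mathcal{L}(\hat{w}^{\max_k})\leqslant 0$ the inequality is in fact immediate, since the right-hand side is nonnegative, so no genuine case analysis is needed.

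I expect no serious obstacle here; the argument is short. The only point requiring a little care is justifying the intermediate bound, i.e.\ reading ``Lipschitz continuous gradient'' in Lemma~\ref{lem:1} as the operator inequality $\|\nabla\mathcal{L}(u)-\nabla\mathcal{L}(v)\|_2\leqslant L\|u-v\|_2$. If one prefers to work only from the descent-lemma form of Lemma~\ref{lem:2}, the same bound $\|\nabla\mathcal{L}(\hat{w}^{\max_k})\|_2\leqslant L\|\hat{w}^{\max_k}-\hat{w}\|_2$ can instead be recovered by applying Lemma~\ref{lem:2} at the gradient-step point $\hat{w}^{\max_k}-\tfrac{1}{L}\nabla\mathcal{L}(\hat{w}^{\max_k})$, using the minimality $\mathcal{L}(\hat{w})\leqslant\mathcal{L}(\hat{w}^{\max_k}-\tfrac{1}{L}\nabla\mathcal{L}(\hat{w}^{\max_k}))$, and combining with a second application of Lemma~\ref{lem:2} at the pair $(\hat{w},\hat{w}^{\max_k})$ together with $\nabla\mathcal{L}(\hat{w})=0$; this route is longer but removes any ambiguity about how the Lipschitz hypothesis is phrased.
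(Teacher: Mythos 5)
Your proof is correct, but it takes a genuinely different route from the paper's. You get the bound directly from Lemma~\ref{lem:1}: since $\nabla\mathcal{L}(\hat{w})=0$ at the unconstrained minimizer, the Lipschitz property gives $\|\nabla\mathcal{L}(\hat{w}^{\max_k})\|_2=\|\nabla\mathcal{L}(\hat{w}^{\max_k})-\nabla\mathcal{L}(\hat{w})\|_2\leqslant L\|\hat{w}^{\max_k}-\hat{w}\|_2$, and Cauchy--Schwarz with $\|\theta\|_2=1$ finishes the argument. The paper instead goes through Lemma~\ref{lem:2}: it restricts $\mathcal{L}$ to the ray $\hat{w}^{\max_k}-\delta\theta$, notes that the quadratic majorant $-\delta\theta^{\mathrm{T}}\nabla\mathcal{L}(\hat{w}^{\max_k})+L\delta^2/2$ is minimized at $\delta=\theta^{\mathrm{T}}\nabla\mathcal{L}(\hat{w}^{\max_k})/L$ while the restricted objective is minimized at $\delta=\|\hat{w}^{\max_k}-\hat{w}\|_2$ (the ray passes through the global minimizer $\hat{w}$ at exactly that distance), and then asserts an ordering between these two minimizers. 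Your argument is shorter and more solid: the paper's key step---inferring an ordering of argmins from a pointwise inequality between a function and its majorant---is not a valid implication in general and is supported there only by a rather heuristic derivative comparison, whereas each link in your chain (first-order optimality, Lipschitz bound, Cauchy--Schwarz) is airtight; your observation that the claim is trivial when $\theta^{\mathrm{T}}\nabla\mathcal{L}(\hat{w}^{\max_k})\leqslant 0$ is also correct. Your fallback derivation of the gradient bound from Lemma~\ref{lem:2} alone, applied at the gradient-step point $\hat{w}^{\max_k}-\tfrac{1}{L}\nabla\mathcal{L}(\hat{w}^{\max_k})$ together with the minimality of $\hat{w}$, is the standard rigorous substitute and sits closest in spirit to what the paper attempts. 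One caveat shared by both proofs: they treat $\nabla\mathcal{L}$ as globally well defined and $L$-Lipschitz despite the nonsmoothness of $w\mapsto w^{\max_k}$ inside $\mathcal{L}$; that assumption is inherited from Lemma~\ref{lem:1} and is not a defect specific to your argument.
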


\begin{proof}
By Lemma~\ref{lem:2}, $\forall\delta\geqslant0$, we have the following inequality
\begin{equation}
\displaystyle\mathcal{L}(\hat{w}^{\max_k}-\delta\theta)-\mathcal{L}(\hat{w}^{\max_k})\leqslant -\delta\theta^{\mathrm{T}}\nabla\mathcal{L}(\hat{w}^{\max_k})+\frac{L \delta^2}{2}.
\end{equation}

By $\|\theta\|=1$ and Lemma~\ref{lem:1}, we have
\begin{equation}
\begin{array}{lll}
&&\displaystyle\frac{\displaystyle\partial(-\delta\theta^{\mathrm{T}}\nabla\mathcal{L}(\hat{w}^{\max_k})+\frac{\displaystyle L \delta^2}{2})}{\displaystyle\partial \delta}\\
&=&-\theta^{\mathrm{T}}\nabla\mathcal{L}(\hat{w}^{\max_k})+L\delta\\
&\geqslant&\displaystyle-\theta^{\mathrm{T}}\mathcal{L}(\hat{w}^{\max_k}-\delta\theta)\\
&=&\displaystyle\frac{\partial(\mathcal{L}(\hat{w}^{\max_k}-\delta\theta)-\mathcal{L}(\hat{w}^{\max_k}))}{\partial \delta}
\end{array}
\end{equation}
And note that $\hat{w}$ is the optimal solution, so $-\theta^{\mathrm{T}}\nabla\mathcal{L}(\hat{w}^{\max_k})\leqslant0$. Then we can easily get
\begin{equation}{\label{Derivative}}
\begin{array}{ll}
&\displaystyle\arg\min_{\delta>0}\left\{-\delta\theta^{\mathrm{T}}\nabla\mathcal{L}(\hat{w}^{\max_k})+\frac{L \delta^2}{2}\right\}\\
\leqslant&\displaystyle\arg\min_{\delta>0}\left\{\mathcal{L}(\hat{w}^{\max_k}-\delta\theta)-\mathcal{L}(\hat{w}^{\max_k})\right\}.
\end{array}
\end{equation}

Finally, we calculate the minima on the left and right sides of~\eqref{Derivative} as follows:
\begin{equation}
\displaystyle\arg\min_{\delta>0}\left\{-\delta\theta^{\mathrm{T}}\nabla\mathcal{L}(\hat{w}^{\max_k})+\frac{L \delta^2}{2}\right\}=\frac{\theta^{\mathrm{T}}\nabla\mathcal{L}(\hat{w}^{\max_k})}{L},
\end{equation}
and
\begin{equation}
\displaystyle\arg\min_{\delta>0}\left\{\mathcal{L}(\hat{w}^{\max_k}-\delta\theta)-\mathcal{L}(\hat{w}^{\max_k})\right\}=\|\hat{w}^{\max_k}-\hat{w}\|_2,
\end{equation}
then we complete the proof.
\end{proof}

\begin{theorem}
\label{thm:2} 
\begin{equation}
\begin{array}{ll}
&\displaystyle\|\hat{w}-\hat{w}^{\max_k}\|_2^2\\
\geqslant&\displaystyle\frac{\displaystyle\hat{w}^{\mathrm{T}}(\mathbf{I}_d-\mathbf{I}_d (k))}{\displaystyle((1+\lambda_2)C_1+\lambda_3) N_{E=1}}\sum_{i:E_i=1}\left(\frac{\displaystyle\sum_{T_j\geqslant T_i}(x_i-x_j)\exp (x_j^{\mathrm{T}}\hat{w}^{\max_k})}{\displaystyle\sum_{T_j\geqslant T_i}\exp (x_j^{\mathrm{T}}\hat{w}^{\max_k})}\right).
\end{array}
\end{equation}
\end{theorem}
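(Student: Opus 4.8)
The plan is to derive this lower bound from the Lipschitz-gradient side of the analysis, exactly mirroring how Theorem~\ref{thm:1} obtained the upper bound from strong convexity. The workhorse is Lemma~\ref{lem:3}, which states $\theta^{\mathrm{T}}\nabla\mathcal{L}(\hat{w}^{\max_k})/L\leqslant\|\hat{w}-\hat{w}^{\max_k}\|_2$, where $\theta$ is the unit vector defined in~\eqref{theta}. Since $\|\hat{w}-\hat{w}^{\max_k}\|_2\geqslant 0$, I would multiply both sides of this inequality by $\|\hat{w}-\hat{w}^{\max_k}\|_2$ and use the identity $\theta\,\|\hat{w}-\hat{w}^{\max_k}\|_2=\hat{w}^{\max_k}-\hat{w}$ to trade the normalized direction for the actual displacement, giving
\[
\frac{(\hat{w}^{\max_k}-\hat{w})^{\mathrm{T}}\nabla\mathcal{L}(\hat{w}^{\max_k})}{L}\leqslant\|\hat{w}-\hat{w}^{\max_k}\|_2^2.
\]

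Next I would evaluate the inner product $(\hat{w}^{\max_k}-\hat{w})^{\mathrm{T}}\nabla\mathcal{L}(\hat{w}^{\max_k})$ using the simplified gradient already derived in~\eqref{L} during the proof of Theorem~\ref{thm:1}. The key algebraic facts are that $\hat{w}^{\max_k}=\mathbf{I}_d(k)\hat{w}$, so $\hat{w}^{\max_k}-\hat{w}=-(\mathbf{I}_d-\mathbf{I}_d(k))\hat{w}$, and that $\mathbf{I}_d(k)$ is a diagonal projection, hence idempotent, with $(\mathbf{I}_d-\mathbf{I}_d(k))\mathbf{I}_d(k)=0$. Because the second (induced-model) summand of~\eqref{L} is left-multiplied by $\mathbf{I}_d(k)$, contracting it against $\hat{w}^{\mathrm{T}}(\mathbf{I}_d-\mathbf{I}_d(k))$ annihilates it, leaving only the first summand. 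Reabsorbing the constant $x_i$ into the exponentially weighted average via $\sum_{T_j\geqslant T_i}\exp(\cdot)/\sum_{T_j\geqslant T_i}\exp(\cdot)=1$ then yields
\[
(\hat{w}^{\max_k}-\hat{w})^{\mathrm{T}}\nabla\mathcal{L}(\hat{w}^{\max_k})=\frac{\hat{w}^{\mathrm{T}}(\mathbf{I}_d-\mathbf{I}_d(k))}{N_{E=1}}\sum_{i:E_i=1}\frac{\sum_{T_j\geqslant T_i}(x_i-x_j)\exp(x_j^{\mathrm{T}}\hat{w}^{\max_k})}{\sum_{T_j\geqslant T_i}\exp(x_j^{\mathrm{T}}\hat{w}^{\max_k})},
\]
which is precisely the numerator in the claimed bound, divided by $N_{E=1}$.

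Finally, to replace $L$ by the explicit constant in the statement, I would invoke Lemma~\ref{lem:1}, which gives $L=(1+\lambda_2)\max_{T_j\geqslant T_i}\|x_j^{\mathrm{T}}\|_2+\lambda_3$, together with the boundedness assumption $\sum_{i\in[N]}\|x_i\|_2\leqslant C_1$, whence $\max_{T_j\geqslant T_i}\|x_j\|_2\leqslant C_1$ and therefore $L\leqslant(1+\lambda_2)C_1+\lambda_3$. Since the proof of Lemma~\ref{lem:3} already records $\theta^{\mathrm{T}}\nabla\mathcal{L}(\hat{w}^{\max_k})\geqslant 0$, the surviving numerator is nonnegative, so enlarging the denominator from $L$ to $(1+\lambda_2)C_1+\lambda_3$ only weakens the lower bound and preserves the inequality direction, completing the argument.

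I expect the main obstacle to be the bookkeeping in the second step: one must verify that the induced-model term of the gradient genuinely vanishes under $(\mathbf{I}_d-\mathbf{I}_d(k))\mathbf{I}_d(k)=0$, and that the transposes of the symmetric diagonal matrices are tracked consistently, so that the surviving term matches the target numerator exactly. The monotone substitution of $L$ by its upper bound is legitimate only because that numerator is nonnegative, so it is worth confirming this sign (via Lemma~\ref{lem:3}) before concluding.
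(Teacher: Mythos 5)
Your proposal is correct and follows essentially the same route as the paper: Lemma~\ref{lem:3} combined with the gradient expression~\eqref{L}, annihilation of the induced-model term via $(\mathbf{I}_d-\mathbf{I}_d(k))\mathbf{I}_d(k)=0$, and then replacing $L$ from Lemma~\ref{lem:1} by the larger constant $(1+\lambda_2)C_1+\lambda_3$. If anything, you are more careful than the paper, which silently performs the denominator enlargement without recording the nonnegativity of $\theta^{\mathrm{T}}\nabla\mathcal{L}(\hat{w}^{\max_k})$ that makes it legitimate.
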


\begin{proof}
By~\eqref{L} and~\eqref{theta}, we have
\begin{equation}
\begin{array}{lll}
&&\displaystyle\frac{\theta^{\mathrm{T}}\nabla\mathcal{L}(\hat{w}^{\max_k})}{L}\\
&=& \displaystyle\frac{(\hat{w}^{\max_k}-\hat{w})^{\mathrm{T}}\nabla\mathcal{L}(\hat{w}^{\max_k})}{L \|\hat{w}^{\max_k}-\hat{w}\|_2}\\
&=& \displaystyle-\frac{(\hat{w}^{\max_k}-\hat{w})^{\mathrm{T}}}{L \|\hat{w}^{\max_k}-\hat{w}\|_2 N_{E=1}}\sum_{i:E_i=1}\left(x_i-\frac{\displaystyle\sum_{T_j\geqslant T_i}x_j\exp (x_j^{\mathrm{T}}\hat{w}^{\max_k})}{\displaystyle\sum_{T_j\geqslant T_i}\exp (x_j^{\mathrm{T}}\hat{w}^{\max_k})}\right)\\
&& \displaystyle+\frac{(\hat{w}^{\max_k}-\hat{w})^{\mathrm{T}}}{L \|\hat{w}^{\max_k}-\hat{w}\|_2 N_{E=1}}\sum_{i:E_i=1}\left(\mathbf{I}_d (k)x_i-\frac{\displaystyle\sum_{T_j\geqslant T_i}\mathbf{I}_d (k)x_j\exp (x_j^{\mathrm{T}}\hat{w})}{\displaystyle\sum_{T_j\geqslant T_i}\exp (x_j^{\mathrm{T}}\hat{w})}\right)\\
&=& \displaystyle\frac{\hat{w}^{\mathrm{T}}(\mathbf{I}_d-\mathbf{I}_d (k))}{L \|\hat{w}^{\max_k}-\hat{w}\|_2 N_{E=1}}\sum_{i:E_i=1}\left(x_i-\frac{\displaystyle\sum_{T_j\geqslant T_i}x_j\exp (x_j^{\mathrm{T}}\hat{w}^{\max_k})}{\displaystyle\sum_{T_j\geqslant T_i}\exp (x_j^{\mathrm{T}}\hat{w}^{\max_k})}\right).
\end{array}
\end{equation}

By Lemmas~\ref{lem:1} and~\ref{lem:3}, we get
\begin{equation}
\begin{array}{lll}
&&\displaystyle\|\hat{w}-\hat{w}^{\max_k}\|_2^2\\
&\geqslant& \displaystyle\frac{\hat{w}^{\mathrm{T}}(\mathbf{I}_d-\mathbf{I}_d (k))}{L N_{E=1}}\sum_{i:E_i=1}\left(x_i-\frac{\displaystyle\sum_{T_j\geqslant T_i}x_j\exp (x_j^{\mathrm{T}}\hat{w}^{\max_k})}{\displaystyle\sum_{T_j\geqslant T_i}\exp (x_j^{\mathrm{T}}\hat{w}^{\max_k})}\right)\\
&\geqslant&\displaystyle\frac{\hat{w}^{\mathrm{T}}(\mathbf{I}_d-\mathbf{I}_d (k))}{((1+\lambda_2)C_1+\lambda_3) N_{E=1}}\sum_{i:E_i=1}\left(x_i-\frac{\displaystyle\sum_{T_j\geqslant T_i}x_j\exp (x_j^{\mathrm{T}}\hat{w}^{\max_k})}{\displaystyle\sum_{T_j\geqslant T_i}\exp (x_j^{\mathrm{T}}\hat{w}^{\max_k})}\right),
\end{array}
\end{equation}
then we complete the proof.
\end{proof}

From Theorems~\ref{thm:1} and~\ref{thm:2}, it is observed that the proved error bound of $\|\hat{w}-\hat{w}^{\max_k}\|_2^2$ tight up to constant factors. Further, it implies that there is a close relationship between $\|\hat{w}-\hat{w}^{\max_k}\|_2^2$ and $\|\mathbf{I}_d (k)-\mathbf{I}_d\|_2$. Based on this intuition, we have the following corollary:
\begin{corollary}
\label{cor:1} 
Let $\sup_{w\in\mathbb{R}^{d\times 1}}\|w\|_2\leqslant C_0$, then we have
\begin{equation}{\label{cor1Equ}}
\displaystyle\|\hat{w}-\hat{w}^{\max_k}\|_2^2\leqslant\displaystyle\frac{4 C_0 C_1 \sqrt{d-k}}{\max{\{\lambda_2, \lambda_3\}}}.
\end{equation}
\end{corollary}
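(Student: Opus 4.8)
The plan is to start from the upper bound already established in Theorem~\ref{thm:1} and reduce the corollary to two elementary norm estimates. Writing the denominator $\max\{\lambda_2 N_{E=1},\lambda_3 N_{E=1}\}$ as $N_{E=1}\max\{\lambda_2,\lambda_3\}$, it suffices to control the numerator
\[
S \triangleq \hat{w}^{\mathrm{T}}(\mathbf{I}_d-\mathbf{I}_d(k))\sum_{i:E_i=1}\left(\frac{\sum_{T_j\geqslant T_i}(x_i-x_j)\exp(x_j^{\mathrm{T}}\hat{w}^{\max_k})}{\sum_{T_j\geqslant T_i}\exp(x_j^{\mathrm{T}}\hat{w}^{\max_k})}\right).
\]
Concretely, I would show $S\leqslant 2 C_0 C_1\sqrt{d-k}\,N_{E=1}$; substituting this into Theorem~\ref{thm:1} then yields~\eqref{cor1Equ} at once, because the factor $2$ from Theorem~\ref{thm:1} combines with the $2$ here to give the constant $4$, and the $N_{E=1}$ cancels against the one in the denominator.

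First I would use that $\mathbf{I}_d-\mathbf{I}_d(k)$ is symmetric to rewrite each summand as $\langle(\mathbf{I}_d-\mathbf{I}_d(k))\hat{w},\,v_i\rangle$, where $v_i$ denotes the bracketed quotient, and apply Cauchy--Schwarz term by term. The vector $v_i$ is a \emph{convex combination} of the differences $\{x_i-x_j:T_j\geqslant T_i\}$, since the weights $\exp(x_j^{\mathrm{T}}\hat{w}^{\max_k})/\sum_{T_l\geqslant T_i}\exp(x_l^{\mathrm{T}}\hat{w}^{\max_k})$ are nonnegative and sum to one. Hence $\|v_i\|_2\leqslant\max_{T_j\geqslant T_i}\|x_i-x_j\|_2\leqslant\max_{T_j\geqslant T_i}(\|x_i\|_2+\|x_j\|_2)$. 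Because the standing assumption $\sum_{l\in[N]}\|x_l\|_2\leqslant C_1$ forces $\|x_l\|_2\leqslant C_1$ for every individual index, this gives the uniform estimate $\|v_i\|_2\leqslant 2C_1$.

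For the remaining factor $\|(\mathbf{I}_d-\mathbf{I}_d(k))\hat{w}\|_2$, the key move is to bound the projection by its Frobenius norm rather than its operator norm: using $\|A\mathbf{x}\|_2\leqslant\|A\|_F\|\mathbf{x}\|_2$ together with the fact that $\mathbf{I}_d-\mathbf{I}_d(k)$ is diagonal with exactly $d-k$ ones, one has $\|\mathbf{I}_d-\mathbf{I}_d(k)\|_F=\sqrt{d-k}$, whence $\|(\mathbf{I}_d-\mathbf{I}_d(k))\hat{w}\|_2\leqslant\sqrt{d-k}\,\|\hat{w}\|_2\leqslant\sqrt{d-k}\,C_0$ by the hypothesis $\sup_{w}\|w\|_2\leqslant C_0$. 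Combining the two estimates over the $N_{E=1}$ summands delivers $S\leqslant N_{E=1}\cdot\sqrt{d-k}\,C_0\cdot 2C_1$, which is exactly the bound needed.

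I expect the only genuine subtlety to be the choice of the Frobenius norm. The operator norm of $\mathbf{I}_d-\mathbf{I}_d(k)$ equals $1$, which would give a tighter, $d$-free estimate; the advertised $\sqrt{d-k}$ decay only surfaces once one deliberately uses the looser Frobenius bound, matching the heuristic ``relationship with $\|\mathbf{I}_d(k)-\mathbf{I}_d\|$'' stated just before the corollary and making the bound visibly shrink as $k\to d$. Apart from that, every inequality above is slack, so the real work is purely bookkeeping: tracking the factor $2$ from $\|x_i-x_j\|_2\leqslant 2C_1$ against the factor $2$ inherited from Theorem~\ref{thm:1}, and confirming the clean cancellation of $N_{E=1}$.
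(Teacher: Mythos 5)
Your proof is correct and is essentially the paper's own argument, fully written out: the paper's one-line proof invokes precisely the two ingredients you use, namely the non-negativity of the exponential (your convex-combination bound $\|v_i\|_2\leqslant 2C_1$) and sub-multiplicativity (your bound $\|(\mathbf{I}_d-\mathbf{I}_d(k))\hat{w}\|_2\leqslant\|\mathbf{I}_d-\mathbf{I}_d(k)\|_F\,\|\hat{w}\|_2=\sqrt{d-k}\,C_0$), applied to the upper bound of Theorem~\ref{thm:1}. The constants also work out exactly as you describe: the factor $4$ is the $2$ from Theorem~\ref{thm:1} times the $2$ from $\|x_i-x_j\|_2\leqslant 2C_1$, with $N_{E=1}$ cancelling.
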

\begin{proof}
By applying the non-negative property of exponential function and the sub-multiplicativity, the proof follows.
\end{proof}

From Corollary~\ref{cor:1}, it is clear that the error of $\|\hat{w}-\hat{w}^{\max_k}\|_2^2$ can be reduced by increasing the regularization parameters $\lambda_2$ and $\lambda_3$ appropriately; meanwhile, it can also make the error smaller by increasing $k$, which can be observed from Supplementary Figures~\ref{Breast_CI_IBS}, \ref{SUPPORT_CI_IBS}, and~\ref{GBM_CI_IBS}, and they show a general upward trend with the increasement of $k$. Despite the useful insights obtained from the bound in~\eqref{cor1Equ}, it is noted that the upper bound in~\eqref{cor1Equ} might not be very tight because of the uniform upper bound  for $w\in{{\mathbb{R}}^{d\times 1}}$. When there  is additional information on $w$, such as sparsity or group-level sparsity, we may further improve the upper found in~\eqref{cor1Equ}. We leave this as a line of future study while presenting a possible improvement below. 

\paragraph{Potential for improvement}

Next, we mathematically analyze that our algorithm has room for further improvement in some cases.

If we use the selected variables to re-train the Max$_k$ part in the EXCEL-extended models, the performance might be improved. Such a fact is revealed by the following Theorem. For simplicity, let 
\[
\displaystyle A(f)=-\frac{\lambda_0}{N_{E=1}}\sum\limits_{i:E_i=1}\left(f(W_{\mathrm{I}} x_i)-\log\sum_{T_j\geqslant T_i}\exp f(W_{\mathrm{I}}x_i)\right),
\]
and 
\[
\displaystyle B(f)=-\frac{\lambda_2}{N_{E=1}}\sum\limits_{i:E_i=1}\left(f(W_{\mathrm{I}}^{\mathrm{max}_k} x_i)-\log\sum_{T_j\geqslant T_i}\exp f(W_{\mathrm{I}}^{\mathrm{max}_k}x_i)\right).
\]
\begin{theorem}{\label{Better}}
Let ($f^*,W^{*}_{\mathrm{I}}$) be an optimal solution of~\eqref{Max_k_Loss}. If $\mathrm{Pos}(W^{*}_{\mathrm{I}}-(W_{\mathrm{I}}^{*})^{\mathrm{max}_k})>0$, then there exists $f^{*'}$, such that
$$
\displaystyle B(f^{*'})\leqslant B(f^{*}),
$$
where $\mathrm{Pos}(W)$ counts the number of positive values in $W$.
\end{theorem}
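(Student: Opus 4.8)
The plan is to exhibit $f^{*'}$ as a function that minimizes the truncated term $B$ on its own, and then read off the desired inequality from the optimality of this minimizer. Since $(f^*,W^{*}_{\mathrm{I}})$ is an optimal solution of the EXCEL objective $\mathcal{L}(f,W_{\mathrm{I}},\mathcal{D})$, its first component $f^*$ minimizes, over the chosen function class $\mathcal{F}$, the partial objective
\[
f\;\longmapsto\;A(f)+B(f)+\lambda_1\,\mathrm{Reg}(f),
\]
with $W_{\mathrm{I}}$ frozen at $W^{*}_{\mathrm{I}}$ (the term $\lambda_3\,\mathrm{Reg}(W_{\mathrm{I}})$ is constant in $f$ and may be dropped). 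I would then define $f^{*'}\in\arg\min_{f\in\mathcal{F}}B(f)$, i.e.\ the model retrained \emph{solely} on the $k$ selected variables that enter through $(W_{\mathrm{I}}^{*})^{\mathrm{max}_k}$. Because $f^*\in\mathcal{F}$ is one admissible competitor in this minimization, the definition of $f^{*'}$ gives $B(f^{*'})\leqslant B(f^*)$ immediately, which is exactly the claim.

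First I would verify that the minimization defining $f^{*'}$ is well posed over $\mathcal{F}$ — that a minimizer exists, or, failing that, replace $f^{*'}$ by any $f$ with $B(f)\leqslant B(f^*)$, which still suffices for the stated non-strict bound. This is the only spot requiring care about the function class; for the shallow-network/linear surrogate used in the earlier analysis, $B$ is bounded below and the argument goes through directly. Second, I would make explicit that $f^*$ is feasible for the $B$-only problem: once $W^{*}_{\mathrm{I}}$ is fixed, $B$ depends on $f$ only through its values at the points $(W_{\mathrm{I}}^{*})^{\mathrm{max}_k}x_i$, and no constraint ties $f$ to the $A$-term, so $f^{*'}$ and $f^*$ range over the same class.

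The role of the hypothesis $\mathrm{Pos}(W^{*}_{\mathrm{I}}-(W_{\mathrm{I}}^{*})^{\mathrm{max}_k})>0$ is to make the improvement genuine rather than vacuous: it guarantees that at least one coordinate carrying a strictly positive weight in $W^{*}_{\mathrm{I}}$ is discarded by the $\mathrm{max}_k$ truncation, so that $W^{*}_{\mathrm{I}}\neq(W_{\mathrm{I}}^{*})^{\mathrm{max}_k}$ and the inputs feeding $A$ and $B$ genuinely differ. I expect the main obstacle to be upgrading $B(f^{*'})\leqslant B(f^*)$ to a \emph{strict} inequality under this hypothesis, which is the content that actually justifies ``re-training helps.'' For that step I would invoke the joint optimality of $(f^*,W^{*}_{\mathrm{I}})$: stationarity in $f$ forces $\nabla_f A(f^*)+\nabla_f B(f^*)+\lambda_1\nabla_f\mathrm{Reg}(f^*)=0$, so $f^*$ can coincide with a $B$-minimizer only if $\nabla_f A(f^*)+\lambda_1\nabla_f\mathrm{Reg}(f^*)=0$ as well; since $A$ is built from the \emph{untruncated} weights $W^{*}_{\mathrm{I}}$, which differ from $(W_{\mathrm{I}}^{*})^{\mathrm{max}_k}$ precisely because $\mathrm{Pos}(\cdot)>0$, this is a non-generic coincidence, and ruling it out (e.g.\ under strict convexity of $B$ in the relevant parametrization, or a non-degeneracy assumption on the discarded coordinates) yields a strict gap. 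The non-strict inequality asserted in the theorem, however, requires none of this and follows from the $\arg\min$ construction alone.
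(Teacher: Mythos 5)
Your proposal is correct, and it takes a genuinely different --- and more rigorous --- route than the paper's own argument. You give a direct construction: take $f^{*'}$ to be a minimizer of $B$ over the function class (i.e., the model retrained solely on the $k$ selected variables), observe that $f^*$ is an admissible competitor in that minimization, and read off $B(f^{*'})\leqslant B(f^*)$; your fallback, that even without existence of the $\arg\min$ any $f$ with $B(f)\leqslant B(f^*)$ suffices --- in particular $f^{*'}=f^*$ itself --- closes the only loose end. The paper instead argues by contradiction: it assumes $B(f^{*''})>B(f^*)$ for every $f^{*''}$, then asserts (conditioned on ``$f^*$ is the optimal solution of $A$'' and ``noting that'' $\mathrm{Pos}(W^{*}_{\mathrm{I}}-(W_{\mathrm{I}}^{*})^{\mathrm{max}_k})>0$) that there exists $f^{*'''}$ with $B(f^{*'''})\leqslant B(f^{*})$, contradicting the assumption. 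That asserted existence is precisely the theorem's conclusion and is given no independent justification, so the paper's proof is essentially circular; your competitor argument is exactly the missing step that makes it sound. Your closing diagnosis is also the right one: with a non-strict inequality the statement is witnessed trivially by $f^{*'}=f^*$, so the hypothesis $\mathrm{Pos}(\cdot)>0$ does no logical work, and the substantive claim that re-training strictly helps would require upgrading to a strict inequality under some non-degeneracy condition (e.g., stationarity of $f^*$ for $A+B+\lambda_1\mathrm{Reg}$ being incompatible with stationarity for $B$ alone when the truncation discards positively weighted coordinates) --- something neither your argument nor the paper's establishes, and which you correctly flag as the real open content of the theorem.
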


\begin{proof}
Assume that, for any $f^{*''}$,
\[
\displaystyle B(f^{*''})> B(f^{*}).
\]

If $f^{*}$is the optimal solution of $A$, noting that $\mathrm{Pos}(W^{*}_{\mathrm{I}}-(W_{\mathrm{I}}^{*})^{\mathrm{max}_k})>0$, then there exists $f^{*'''}$, such that
\[
\displaystyle B(f^{*'''})\leqslant B(f^{*}).
\]
This contradicts our assumption.
\end{proof}

\section{Experimental results}

Next, we apply EXCEL to two semi-synthetic datasets and three real-world survival datasets, including clinical and genetic data, with varying numbers of subjects and variables and portions of censored observations. We will demonstrate that our proposed EXCEL approach can identify critically important risk factors from thousands of variables potentially related to the survival time and, with only a small subset of identified variables, it can achieve prognostic performance comparable to or better than the original models with all variables.

\paragraph{Semi-synthetic data}
We first benchmark our EXCEL approach by embedding it as a pluggable module into the standard CPH model with two semi-synthetic datasets. These datasets allow for straightforward visual inspection of selected features. Also, we compare the applications of our approach with two baselines, the standard CPH model and the more recent algorithm, BeSS. We will use R function \texttt{bess.one()}, which aims to solve the best subset selection problem with a specified cardinality.

1) We use the pixels of handwritten digits 3 and 8 in USPS~\citep{Li2017} as variables and generate synthetic survival times for them using the exponential distribution, following the way of~\citep{Plsterl2019,Goldstein2020,Manduchi2022} for constructing survMNIST. {\footnote{We randomly assigned each class label to a risk group, so that one digit would correspond to better survival and the other to worse survival. Then, we generate a risk score that indicates how risky it is to experience an event, relative to others. For more details, see the links of Semi-synthetic data in Section {\bf Data availability}.} The resultant data is called SurvUSPS$\_$3vs8.

2) For digits 3 and 8 in USPS, firstly, we resize the original $16\times16$ images into $28\times 28$ images by extending each boundary by $6$ pixels and filling in uniform noise. Then, we follow the way in 1) to assign survival times for them. The resultant data is called NSurvUSPS$\_$3vs8.

On {\bf NSurvUSPS$\_$3vs8}, we compare the standard CPH model, BeSS for the best subset selection in CPH model, and Max$_k$-CPH. Randomly sampled images are illustrated in Figure~\ref{NoiseandSplit} (a). It is found that many variables selected by the baseline CPH model are located in the noise boundary regions (see Figure~\ref{NoiseandSplit} (b) and (c)), while those selected by BeSS and Max$_k$-CPH are salient points distinguishing the digits 3 and 8 (see Figure~\ref{NoiseandSplit} (d)-(g)). These results indicate that BeSS and our proposed method can identify important variables and eliminate noise.

On {\bf SurvUSPS$\_$3vs8}, we randomly split all samples 10 times by a ratio of 80:20 for training and test sets, and then we use the three models to perform variable selection. The experimental results are demonstrated in Figure~\ref{NoiseandSplit} (i)-(k). It is clear that the variables selected by Max$_k$-CPH essentially overlap for different splits, whereas those by the baselines do not overlap well. These results reveal that the features selected by our EXCEL method are more stable compared to those by the baselines.

\begin{figure*}[!bthp]
\begin{center}
\centerline{\includegraphics[width=1\textwidth]{./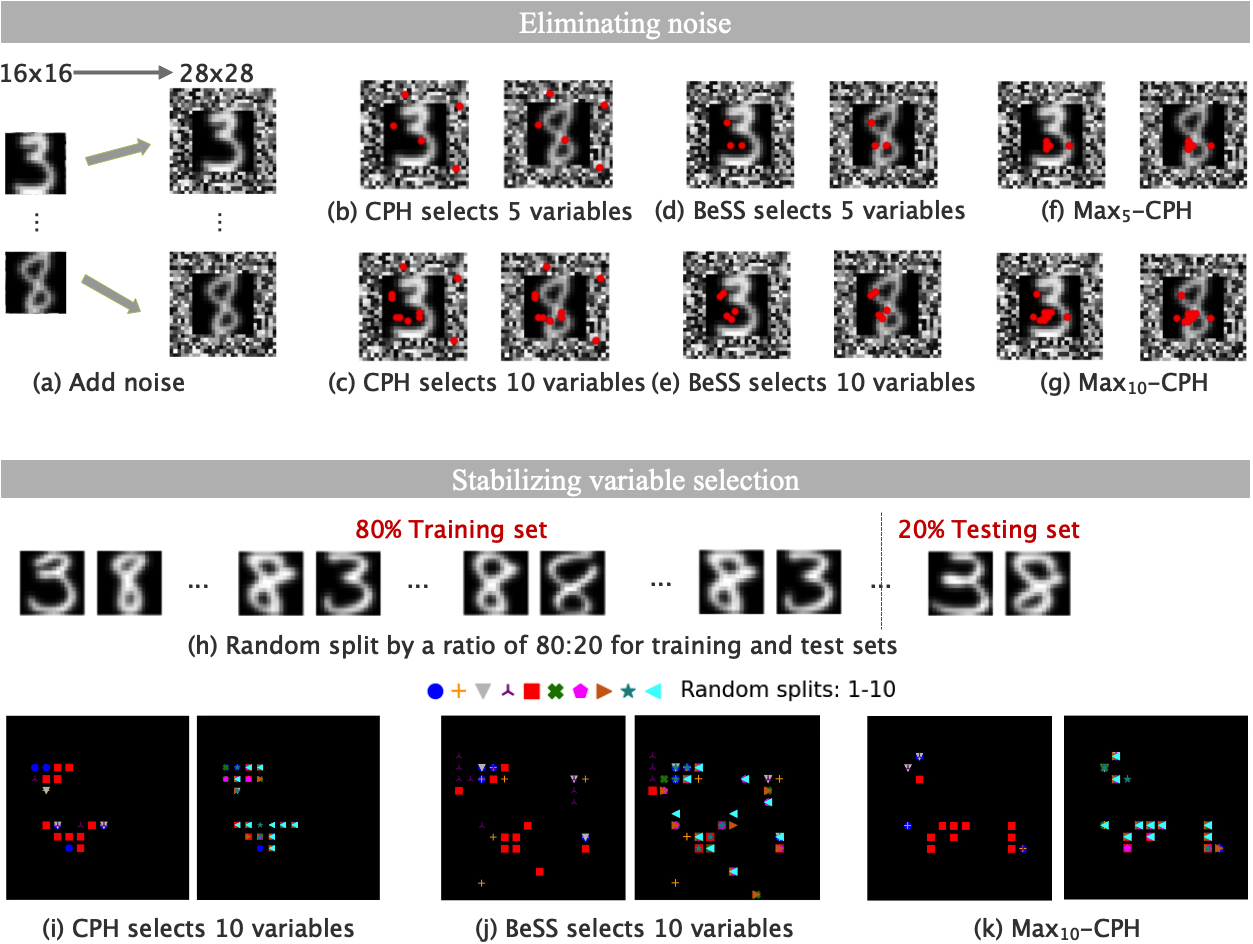}}
\end{center}
\vskip -0.3in
\caption{Experiments on two semi-synthetic datasets. (a)-(g) Noise elimination analysis of different methods with $k= 5$ and $10$ on NSurvUSPS$\_$3vs8; (h) Illustration of random splitting for training and test sets; (i)-(k) Stability analysis of variable selection by different methods with 5 (left in each pair of images) and 10 (right in each pair of images) random splits for $k=10$.}
\label{NoiseandSplit}
\vskip -0.1in
\end{figure*}

Next, we will implement a variety of experiments on real-world datasets, and relevant statistics of these datasets are shown in Supplementary Table~\ref{Datasets}.

\paragraph{Breast cancer dataset} We directly import the function \texttt{sksurv}.\texttt{datasets}.\texttt{load $\_$breast$\_$} \texttt{cancer()} {\footnote{See the link: \href{https://scikit-survival.readthedocs.io/en/latest/api/generated/sksurv.datasets.load_breast_cancer.html}{https://scikit-survival.readthedocs.io/en/latest/api/generated/sksurv.datasets .load$\_$breast$\_$cancer.html}}} to download this dataset~\citep{Desmedt2007}. It contains the expression levels of 76 genes, age, estrogen receptor status (denoted by er), tumor size, and grade for 198 subjects. First, we perform one-hot encoding for categorical variables of er and grade; then, we apply CPH, DeepSurv, Cox-nnet, and their EXCEL-embedded counterparts to analyze this dataset. The results for $k$ from 5 to 40 with a step size of 5 are shown in Supplementary Figure~\ref{Breast_CI_IBS}(a)-(c). Using the number of selected variables corresponding to the optimal CI, we apply the EXCEL-extended models by setting $k$ to the best numbers of selected variables to calculate IBS in Supplementary Figure~\ref{Breast_CI_IBS}(e)-(g). It is observed that the EXCEL-extended survival models, with fractions of selected variables, can achieve comparable or better performance than their counterparts without EXCEL. Moreover, the EXCEL-extended models can identify a subset of critical variables for achieving maximal performance. We provide the top 10 variables selected by the best model, i.e., Max$_{35}$-DeepSurv, in Table~\ref{Ranked_variables}.

\begin{table}[!htbp]
\centering
\caption{Top 10 variables selected by EXCEL-extended survival models on three datasets.}
\begin{tabular}{|l|l|l|l|l|}
\hline
\diagbox [width=7em,trim=l] {\bf Rank}{\bf Dataset} & {\bf Breast cancer} & {\bf SUPPORT} & {\bf GBM}   \\
\hline
1 &  {\sl X204540$\_$at}  & slos   &  AGE  \\
\hline
2 &  {\sl X202240$\_$at}  & sfdm2$\_$$<$2 mo. follow-up   &  {\sl HIST3H2A}  \\
 \hline
3 &  {\sl X203306$\_$s$\_$at}  & sex$\_$female   &  {\sl PRODH}  \\
 \hline
4 &  {\sl X218883$\_$s$\_$at}  &  sex$\_$male  &   {\sl CCBL2} \\
 \hline
5 &  er  &  ca$\_$metastatic  &  {\sl ATP10B}  \\
 \hline
6 &  {\sl X202239$\_$at}  &  sfdm2$\_$no (M2 and SIP pres)  &   {\sl FZD7} \\
 \hline
7 &  {\sl X202687$\_$s$\_$at}  & sfdm2$\_$adl$>=$4 ($>=$5 if sur)   &  {\sl TPM4}  \\
 \hline
8 &  {\sl X204014$\_$at}  &  avtisst  &   {\sl NMB} \\
 \hline
9 &  {\sl X208180$\_$s$\_$at}  & dzgroup$\_$Lung Cancer   &   {\sl RPS4Y1} \\
 \hline
10&  {\sl X207118$\_$s$\_$at}  &  sfdm2$\_$SIP$>=$30  &  {\sl PACS1}  \\
\hline
\end{tabular}
\vskip -0.1in
\label{Ranked_variables}
\end{table}

For quantifying the significance of selected variables, we use $k$-means clustering to cluster the subjects into two groups according to the selected variables, and then we use the KM estimator to estimate the survival functions and the Log-rank test to calculate p-values for the groups according to each of the individual variables. The top 10 variables identified by this clustering-then-test approach are as follows (the values in the parentheses are p-values): {\sl X202240$\_$at} (6.1E-05), {\sl X202418$\_$at} (2.3E-03), {\sl X202687$\_$s$\_$at} (9.5E-03), {\sl X203306$\_$s$\_$at} (5.8E-03), {\sl X204014$\_$at} (4.5E-03), {\sl X204015$\_$s$\_$at} (9.8E-03), {\sl X208180$\_$s$\_$at} (1.8E-03), {\sl X211762$\_$s$\_$at} (3.8E-03), er (9.6E-03), and {\sl X205034$\_$at} (1.5E-02). In~\citep{Schmid2016}, two methods were proposed, C-based splitting and log-rank splitting, to select variables on this dataset. Four variables ({\sl X202240$\_$at}, {\sl X204014$\_$at}, {\sl X204015$\_$s$\_$at}, and er) selected by its former method and two variables ({\sl X202240$\_$at} and {\sl X20401 4$\_$at}) by the latter are of statistical significance. In Table~\ref{Ranked_variables}, it is seen that six ({\sl X202240$\_$at}, {\sl X202687$\_$s$\_$at}, {\sl X203306$\_$s$\_$at}, {\sl X204014$\_$at}, {\sl X208180$\_$s$\_$at}, and er) of the top 10 variables identified by Max$_{35}$-DeepSurv have statistical significance. We depict the KM survival curves for the top 3 variables identified by Max$_{35}$-DeepSurv in Supplementary Figure~\ref{Breast_p_values} (a)-(c) and the scatter plots with 95\% confidence intervals for pairwise groups for different individual variables in Supplementary Figure~\ref{Breast_p_values} (d)-(f). These figures indicate that the two groups obtained from each of the top 3 variables are well separated in survival times, suggesting these variables' high predictive ability. Here, it is noted that the KM curve of Group 1 for the top feature in Supplementary Figure~\ref{Breast_p_values} (a) plummets sharply after about time 7,000 (indicating the deaths of all subjects in this group) while Group 2 does not, which illustrates the close association of this feature with the patent difference between the 2 groups' survival times. It is worth pointing out that, nonetheless, KM curves in Supplementary Figure~\ref{Breast_p_values} (a) have a larger p-value than the two lower-ranked features in Supplementary Figure~\ref{Breast_p_values} (b)-(c), which appears to be a limitation of the Log-rank test (that is, it  can only indicate whether the 2 groups are of the same distribution but not the size of the difference~\citep{bland2004logrank}) yet a strength of EXCEL.

\paragraph{Study to Understand Prognoses and Preferences for Outcomes and Risks of Treatments (SUPPORT) dataset} To analyze models on real clinical data, we use a common benchmark dataset in the survival analysis SUPPORT~\citep{Knaus1995}. This dataset contains demographic, score data acquired from patients diagnosed with cancer, chronic obstructive pulmonary disease, cirrhosis, acute renal failure, multiple organ system failure, sepsis, and so on. We preprocess this dataset following~\citep{Manduchi2022} and obtain the resultant dataset with 9,105 subjects and 59 variables. Then we compare different models and show the corresponding results for $k$ from 5 to 40 with a step size of  5 in Supplementary Figure~\ref{SUPPORT_CI_IBS} (a)-(c), and the IBS by the EXCEL-embedded models with the best numbers of selected variables in Supplementary Figure~\ref{SUPPORT_CI_IBS} (d)-(g). It is observed that EXCEL-extended models achieve comparable or better performance when a certain number of variables are selected, and the subset of top 10 critical variables for the best model, Max$_{25}$-Cox-nnet, is given in Table~\ref{Ranked_variables}.

Also, we adopt $k$-means clustering to cluster the subjects into two groups according to each of the top 3 selected variables; then, on each resultant group, we perform the KM estimation and  Log-rank test. The results are shown in Supplementary Figure~\ref{Support_p_values}. Each of these variables clearly exhibits high predictive ability for survival time.

\paragraph{Glioblastoma multiforme (GBM) cancer dataset} Obtained from the Cancer Genome Atlas (TCGA, http://cancergenome.nih.gov), this dataset contains gene expression and clinical data. We preprocess the data following~\citep{Hao2018} to obtain 5,567 genes, 860 pathways, and clinical data of ages from 522 subjects for our analysis. The  pathway data are mainly used for Cox-PASnet~\citep{Hao2018} and its EXCEL-extended counterpart. For other models, we only use the combined data of gene variables and one clinical variable, AGE, as input without any pathway information. We compare different models with $k$ ranging from 100 to 1000 with a step size of 100, showing the results in Supplementary Figure~\ref{GBM_CI_IBS} (a)-(c) and (g) and the corresponding IBS by the embedded models with the best numbers of selected variables in Supplementary Figure~\ref{GBM_CI_IBS} (d)-(f) and (h). It is observed that Max$_{800}$-DeepSurv achieves the best performance when 800 variables are selected, from which the top 10 critical variables are given in Table~\ref{Ranked_variables}.

Further, we adopt $k$-means clustering to cluster the subjects into two groups according to each of the top 3 selected variables; then, we perform the KM estimation and Log-rank test for each obtained group. The results shown in Supplementary Figure~\ref{GBM_p_values} indicate that each of these selected variables is well separable for the subjects and of high predictive ability for the survival time. More detailed results about CI and IBS for these datasets are provided in Supplementary Tables~\ref{CI} and~\ref{IBS}.

\section{Discussion}

Real-world survival data, such as gene expression, is often high-dimensional with a much larger number of covariates than the subjects. For computational feasibility and explainability of learning, we propose a plug-and-play approach that can be easily used with the existing DL-based survival models for variable selection.

\paragraph{Reducing the number of variables for learning} Structurally, our EXCEL approach uses the operator Max$_k$ to pinpoint $k$ most predictive variables, which introduces a sub-architecture into the architecture of existing models. The induced sub-architecture has a fitting error term similar to the existing models, and it is trained cooperatively with the architecture of existing models. By regarding the existing model as a regularization term, we observe that the resulting sub-model can select critical variables while capturing the complex nonlinear relationship between input variables and survival time. As empirically shown above and summarized in Figure~\ref{CompareReducedVariables}, this concise approach for regularization is able to select the most predictive $k$ variables to enhance the learning ability of the survival model, achieving prediction performance on par with or better than strong baseline methods. By discarding noise or less informative variables and identifying useful variables of long-term prognostic values, our easily deployable approach can help facilitate explainability to the DL-based models and facilitate their prediction, especially on high-dimensional data.

\begin{figure}[!htbp]
\vskip -0.15in
\begin{center}
\subfigure[]{
\centering
{
\begin{minipage}[t]{0.4\linewidth}
\centering
\centerline{\includegraphics[width=1.3\textwidth]{./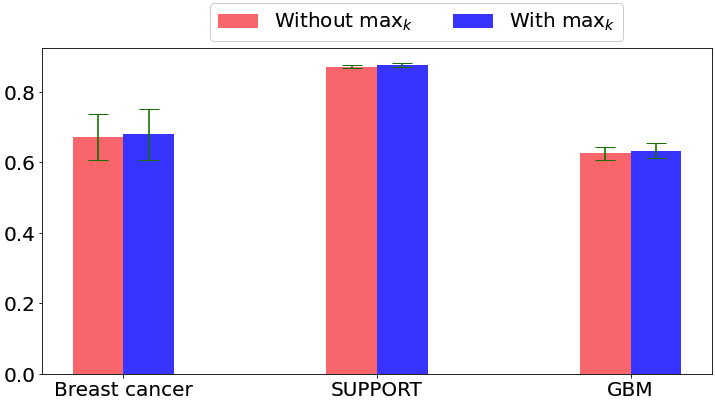}}
\end{minipage}%
}%
}%
\hspace{0.4in}
\subfigure[]{
\centering
{
\begin{minipage}[t]{0.4\linewidth}
\centering
\centerline{\includegraphics[width=1.1\textwidth]{./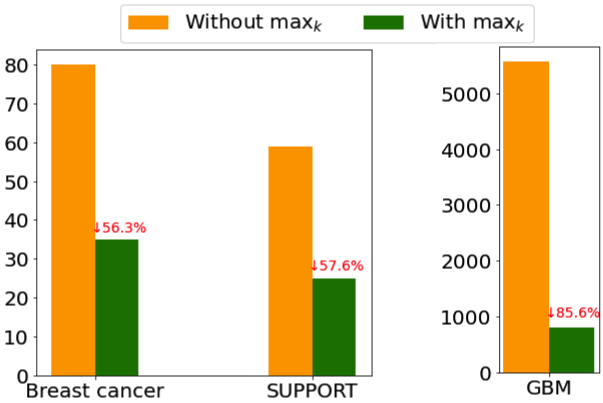}}
\end{minipage}%
}%

}%
\end{center}
\caption{Performance and the number of used variables on Breast cancer (DeepSurv and Max$_k$-DeepSurv), SUPPORT (Cox-nnet and Max$_k$-Cox-nnet), and GBM (DeepSurv and Max$_k$-DeepSurv). (a) The comparison of performance (in CI) with and without Max$_k$; (b) The numbers of variables used in our models compared to the full numbers in the original models.}
\label{CompareReducedVariables}
\vskip -0.15in
\end{figure}

\paragraph{Explainability of model learning} Our EXCEL approach brings explainability to model learning. Taking the Breast cancer dataset as an example, the clinical variable \lq\lq er\rq\rq\, is identified as one of the top 5 important variables; in the literature, the status of estrogen receptor has been confirmed to associate with the risk of breast cancer~\citep{Yip2014,Farcas2021}. Also, take the GBM dataset as another example. GBM cancer is one of the most aggressive, malignant brain tumors with a poor prognosis~\citep{Hanif2017}. During learning, we use the clinical variable \lq\lq AGE\rq\rq\, together with more than 5,000 gene expressions as input. Our Max$_{800}$-DeepSurv identifies 800 important variables out of 5,568 variables, with the most important 3 variables being AGE, {\sl HIST3H2A}, and {\sl PRODH}. In the literature, age was identified as a significant covariate for the prognosis of GBM~\citep{Bozdag2013,Lu2016}; {\sl HIST3H2A} was reported as a differentially expressed gene in many studies, e.g.,~\citep{Yoshino2010,Bozdag2013,Gerber2014}; {\sl PRODH} was also found to be linked with GBM cancer in~\citep{Panosyan2017}.

\begin{figure*}[!htbp]
\begin{center}
\subfigure[Breast cancer]{
\centering
{
\begin{minipage}[t]{0.3\linewidth}
\centering
\centerline{\includegraphics[width=1.05\textwidth]{./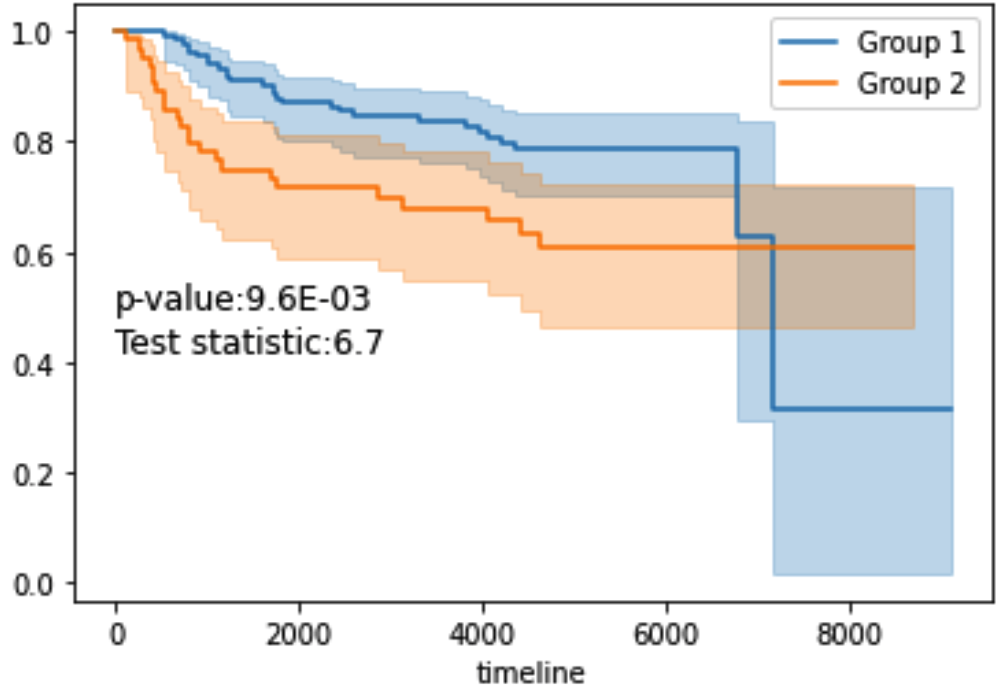}}
\end{minipage}%
}%
}%
\hspace{-0.03in}
\subfigure[SUPPORT]{
\centering
{
\begin{minipage}[t]{0.3\linewidth}
\centering
\centerline{\includegraphics[width=1.05\textwidth]{./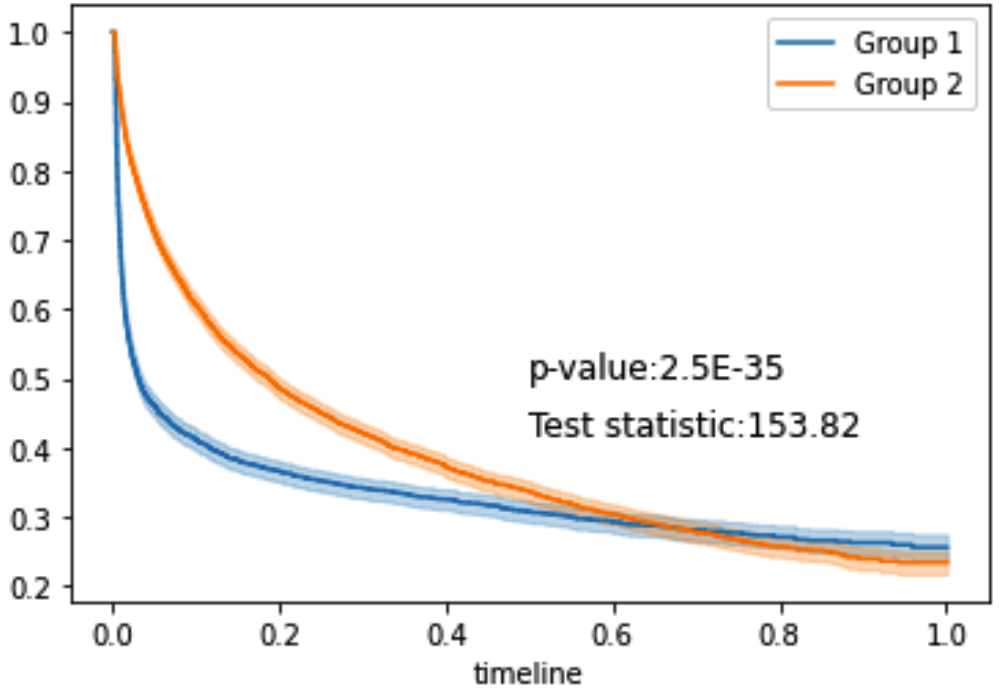}}
\end{minipage}%
}%

}%
\hspace{-0.03in}
\subfigure[GBM]{
\centering
{
\begin{minipage}[t]{0.3\linewidth}
\centering
\centerline{\includegraphics[width=1.05\textwidth]{./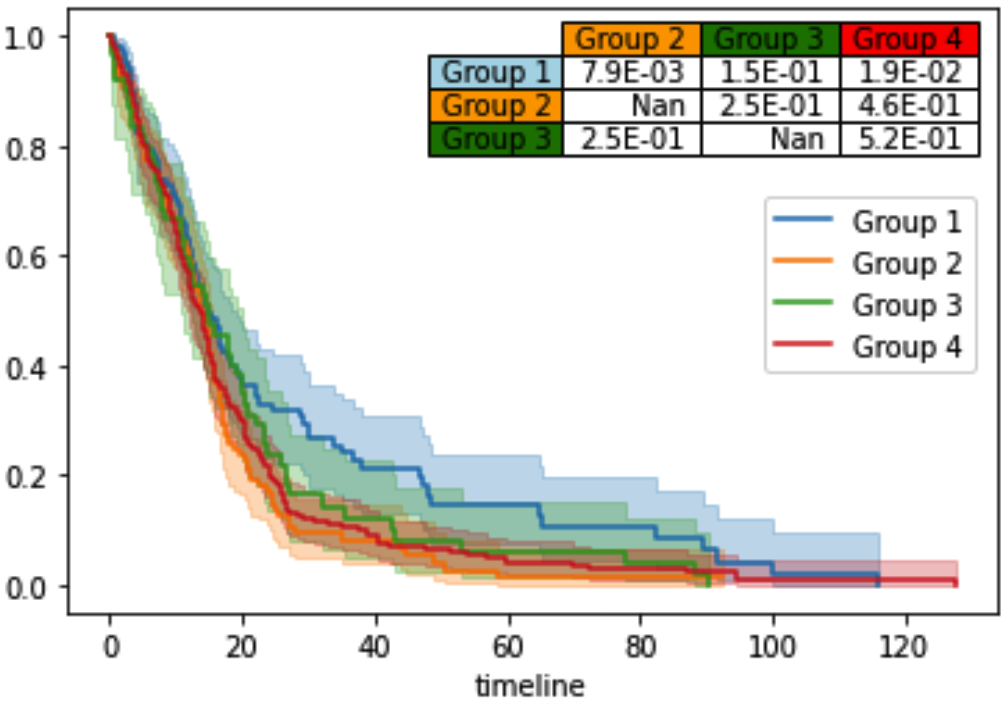}}
\end{minipage}%
}%
}%
\end{center}
\vskip -0.15in
\caption{Survival curves by KM estimator with Log-rank tests for all selected variables by Max$_{35}$-DeepSurv, Max$_{25}$-Cox-nnet, and Max$_{800}$-DeepSurv. The shaded areas in (a)-(c) show the 95\% confidence interval.}
\label{P_value_all_selected_variables}
\vskip -0.2in
\end{figure*}

\paragraph{Predictability of the selected subset of variables} For further analyzing the significance of selected variables, we use $k$-means clustering to cluster the subjects into two groups according to the subsets of variables selected by Max$_{35}$-DeepSurv, Max$_{25}$-Cox-nnet, and Max$_{800}$-DeepSurv, respectively; then we apply  the KM estimator and Log-rank test to the groups, with the results shown in Figure~\ref{P_value_all_selected_variables}. The p-values for Breast cancer and SUPPORT datasets are 9.6E-03 and 2.5E-35, indicating significant differences between the KM curves based on EXCEL-selected variables. For the dataset GBM, the number of the original variables is large, out of which 800 variables are selected as the most predictive for survival times. Subsequently, we use $k$-means clustering to cluster the subjects into four groups according to these variables. The p-values of two pairwise groups are significant, i.e., 7.9E-03 for group 1 vs. group 2 and 1.9E-02 for group 1 vs. group 4, each of which indicates a significant difference between the KM curves for these groups.


\section{Conclusion}

While crucial to explainability, it is challenging to reveal the associations between highly nonlinear, high-dimensional, and low-sample size data, such as genetic and clinical dataset, and the survival time in survival analysis. Nonlinear survival methods, including DL-based models, can well model high-level interactions; nonetheless, they typically work as a black-box, lacking interpretation and needed explainability. Thus, transparent learning and accurate prediction of survival time based on patients' critical variables are urgent, unmet needs. A solution can help identify new intervention targets and potentially improve patient care and treatment practice. In this paper, we are interested in understanding how the survival models yield predictions on right censored data. To this end, we propose a novel approach, EXCEL, to identify critical variables of long term prognostic values and simultaneously implement deep model training based on these identified variables. In a wide variety of experiments, including two semi-synthetic datasets, one toy dataset, and three clinical and/or genetic datasets, we demonstrate the effectiveness of our proposed explainable censored learning or finding critical features and making survival predictions.

This paper focuses on a practical algorithm that can plug and play on existing DL-based survival models, and we will further study its theoretical properties in our future work.

\section*{Funding}
This work was partially supported by the NIH grants R21AG070909, R56NS117587, R01HD101508, P30 AG072496, and ARO W911NF-17-1-0040.




\appendix

\section{Datasets}

\begin{table}[!hbtp]
\centering
\caption{Statistics of used datasets.}
\begin{tabular}{|l|c|c|c|c|c|}
\hline
\diagbox [width=7em,trim=l] {\bf Dataset}{\bf Statistics} & \bf\#Sample & \bf\#Variable & \bf\%Censored   \\
\hline
Breast cancer &  198  &  80 & 74.24   \\
\hline
SUPPORT &9,105  &  59 & 31.89     \\
\hline
GBM & 522  & 5,568  &   14.37 \\
\hline
\end{tabular}
\vskip -0.2in
\label{Datasets}
\end{table}

\newpage
\section{Appendix}

\subsection{CI of Different Methods}

\begin{table}[!hbpt]
\centering
\caption{CI of compared methods.}
\begin{tabular}{|l|r|c|c|}
\hline
\diagbox [width=7em,trim=l] {\bf Method}{\bf Dataset} & {\bf Breast cancer} & {\bf SUPPORT} & {\bf GBM}  \\
\hline
CPH &  0.653$\pm$0.072 & 0.842$\pm$0.004  & 0.614$\pm$0.021  \\
\hline
Max$_k$-CPH & $k$:40, 0.638$\pm$0.077& $k$:35, 0.842$\pm$0.004 &$k$:300, 0.623$\pm$0.015  \\
\hline
DeepSurv &  0.672$\pm$0.065 & 0.828$\pm$0.003 &  0.625$\pm$0.018 \\
\hline
Max$_k$-DeepSurv & $k$:35, {\bf 0.679$\pm$ 0.072}  & $k$:30, 0.830$\pm$ 0.003  & $k$:800, {\bf 0.633$\pm$ 0.021}  \\
\hline
Cox-nnet & 0.672$\pm$ 0.066  &  0.871$\pm$0.005 & 0.526$\pm$ 0.043  \\
\hline
Max$_k$-Cox-nnet & $k$:15, 0.678$\pm$0.057   & $k$:25, {\bf 0.876$\pm$ 0.005}  & $k$:800, 0.534$\pm$0.049  \\
\hline
PASNet & $\backslash$  & $\backslash$  & 0.574$\pm$0.054   \\
\hline
Max$_k$-PASNet & $\backslash$  & $\backslash$  & $k$:100, 0.580$\pm$0.019  \\
\hline
\end{tabular}
\vspace{0cm}
\label{CI}
\end{table}

\newpage
\subsection{IBS of Different Methods}

\begin{table}[!hbpt]
\centering
\caption{IBS of compared methods.}
\begin{tabular}{|l|c|c|c|c|}
\hline
\diagbox [width=7em,trim=l] {\bf Method}{\bf Dataset} & {\bf Breast cancer} & {\bf SUPPORT} & {\bf GBM}   \\
\hline
CPH &  0.174$\pm0.021$ & {\bf 0.260$\pm$ 0.006}  & 0.392$\pm$0.028  \\
\hline
Max$_k$-CPH & 0.176$\pm$0.023& {\bf 0.260$\pm$ 0.006}  & 0.359$\pm$0.023  \\
\hline
DeepSurv & 0.174$\pm$0.017  & 0.266$\pm$0.004  & {\bf 0.337$\pm$0.019}  \\
\hline
Max$_k$-DeepSurv & {\bf 0.173$\pm$0.018}  & 0.265$\pm$0.005  & 0.340$\pm$ 0.020\\
\hline
Cox-nnet & 0.174$\pm$0.018  & 0.269$\pm$0.007 & 0.346$\pm$0.019   \\
\hline
Max$_k$-Cox-nnet & 0.174$\pm$0.021  & 0.264$\pm$0.007  & 0.346$\pm$0.019 \\
\hline
PASNet & $\backslash$  &  $\backslash$ & 0.351$\pm$ 0.018  \\
\hline
Max$_k$-PASNet &  $\backslash$ &  $\backslash$ & 0.362$\pm$0.023 \\
\hline
\end{tabular}
\vspace{0cm}
\label{IBS}
\end{table}

\newpage
\section{Different Methods on Breast Cancer}

\begin{figure}[!htbp]
\vskip -0.15in
\begin{center}
\subfigure[CPH vs. Max$_k$-CPH]{
\centering
{
\begin{minipage}[t]{0.28\linewidth}
\centering
\centerline{\includegraphics[width=1\textwidth]{./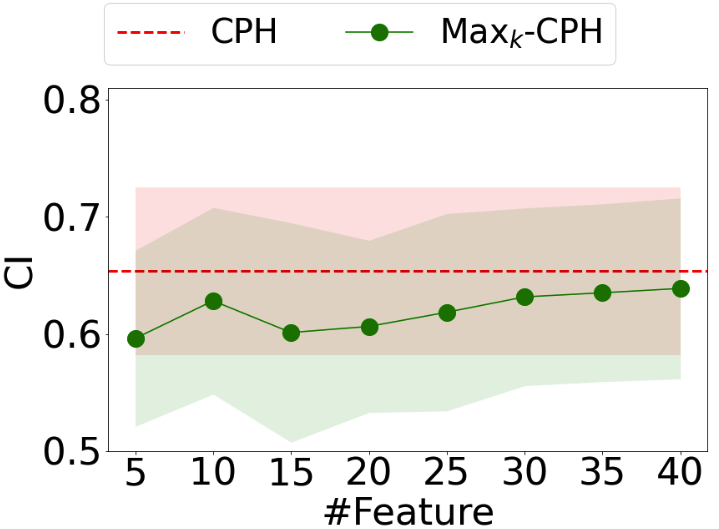}}
\end{minipage}%
}%
}%
\hspace{0.1in}
\subfigure[Cox-nnet vs. Max$_k$-Cox-nnet]{
\centering
{
\begin{minipage}[t]{0.28\linewidth}
\centering
\centerline{\includegraphics[width=1\textwidth]{./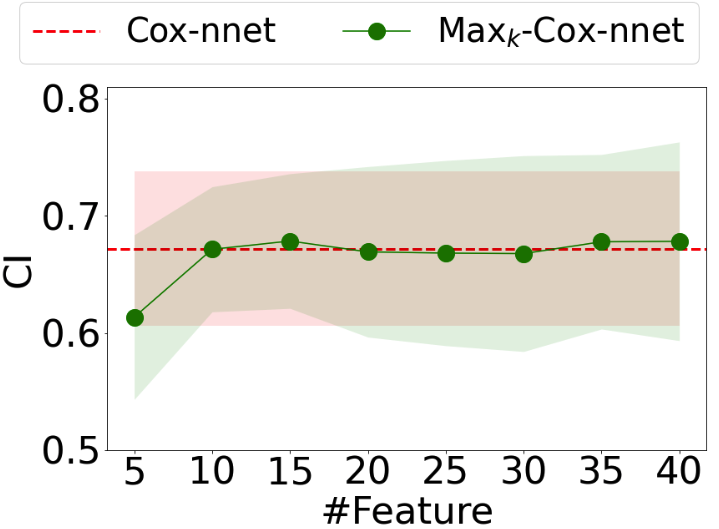}}
\end{minipage}%
}%

}%
\hspace{0.1in}
\subfigure[DeepSurv vs. Max$_k$-DeepSurv]{
\centering
{
\begin{minipage}[t]{0.28\linewidth}
\centering
\centerline{\includegraphics[width=1\textwidth]{./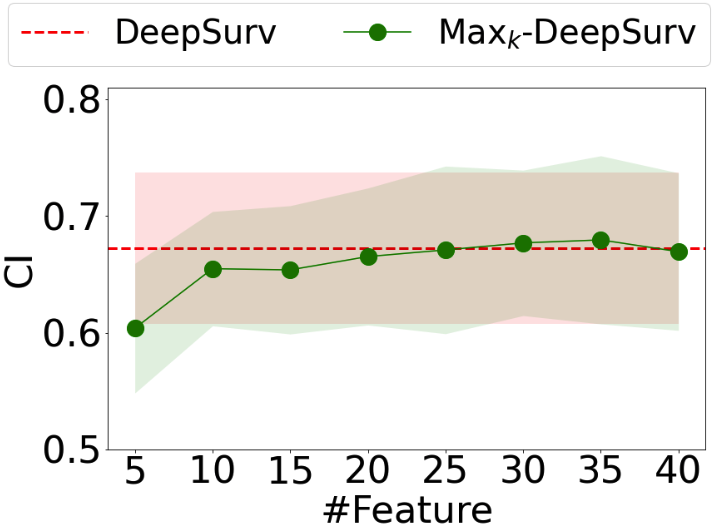}}
\end{minipage}%
}%
}%

\subfigure[CPH vs. Max$_{40}$-CPH]{
\centering
{
\begin{minipage}[t]{0.28\linewidth}
\centering
\centerline{\includegraphics[width=0.95\textwidth]{./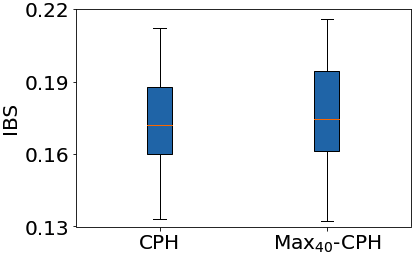}}
\end{minipage}%
}%
}%
\hspace{0.1in}
\subfigure[Cox-nnet vs. Max$_{15}$-Cox-nnet]{
\centering
{
\begin{minipage}[t]{0.28\linewidth}
\centering
\centerline{\includegraphics[width=0.95\textwidth]{./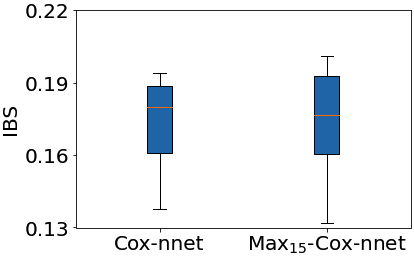}}
\end{minipage}%
}%

}%
\hspace{0.1in}
\subfigure[DeepSurv vs. Max$_{35}$-DeepSurv]{
\centering
{
\begin{minipage}[t]{0.28\linewidth}
\centering
\centerline{\includegraphics[width=0.95\textwidth]{./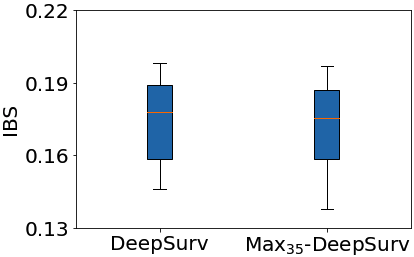}}
\end{minipage}%
}%
}%
\end{center}
\vskip -0.15in
\caption{Prediction performance, in CI (higher, better) and IBS (lower, better), of different methods on breast cancer dataset. The shaded areas in (a)-(c) show the fluctuations in standard error; the orange lines in (d)-(f) show the medians. For (a)-(c), the horizontal axis is only meaningful for the EXCEL-extended models.}
\label{Breast_CI_IBS}
\vskip -0.15in
\end{figure}

\begin{figure}[!htbp]
\vskip -0.1in
\begin{center}
\subfigure[KM curves for \lq\lq{\sl X204540$\_$at}\rq\rq]{
\centering
{
\begin{minipage}[t]{0.28\linewidth}
\centering
\centerline{\includegraphics[width=0.95\textwidth]{./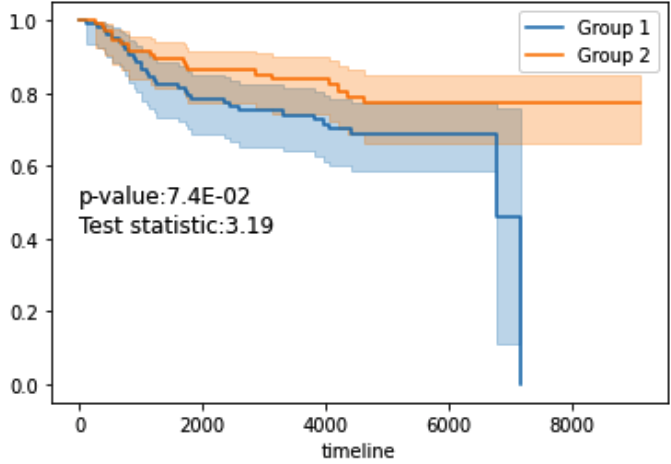}}
\end{minipage}%
}%
}%
\hspace{0.1in}
\subfigure[KM curves for \lq\lq{\sl  X202240$\_$at}\rq\rq]{
\centering
{
\begin{minipage}[t]{0.28\linewidth}
\centering
\centerline{\includegraphics[width=0.95\textwidth]{./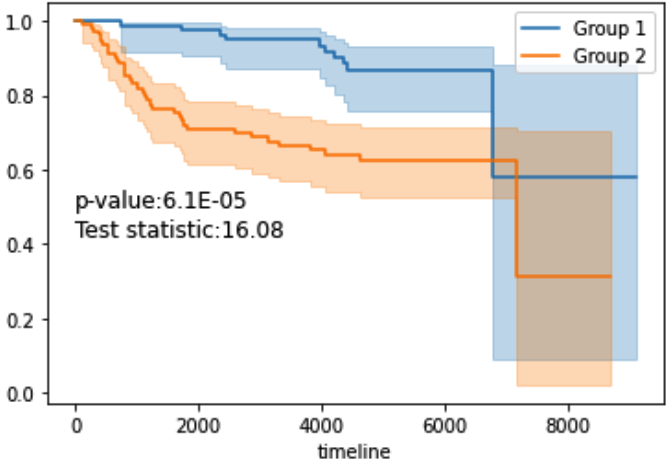}}
\end{minipage}%
}%

}%
\hspace{0.1in}
\subfigure[KM curves for \lq\lq{\sl  X203306$\_$s$\_$at}\rq\rq]{
\centering
{
\begin{minipage}[t]{0.28\linewidth}
\centering
\centerline{\includegraphics[width=0.95\textwidth]{./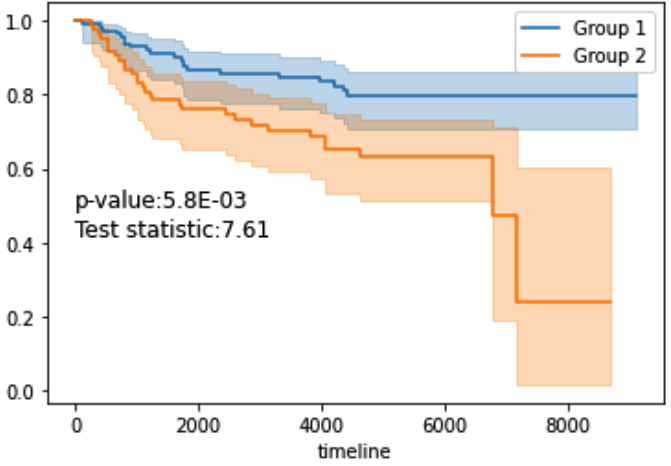}}
\end{minipage}%
}%
}%

\subfigure[Two groups of \lq\lq{\sl  X204540$\_$at}\rq\rq]{
\centering
{
\begin{minipage}[t]{0.28\linewidth}
\centering
\centerline{\includegraphics[width=0.95\textwidth]{./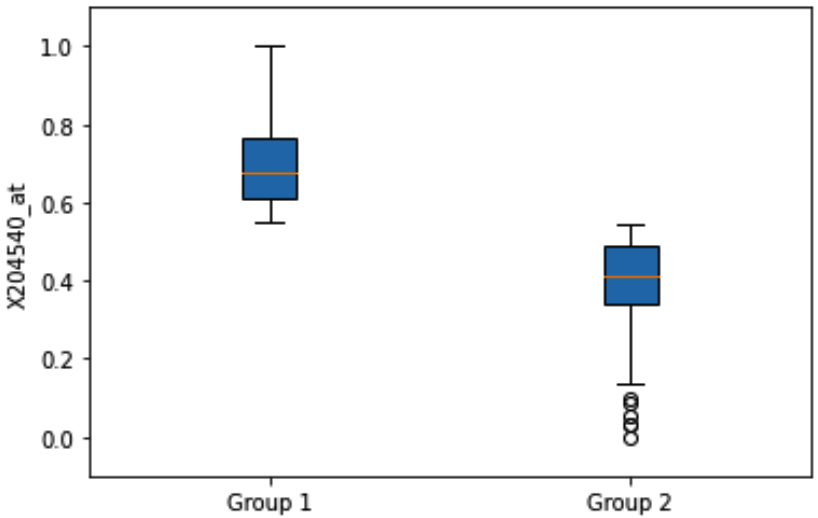}}
\end{minipage}%
}%
}%
\hspace{0.1in}
\subfigure[Two groups of \lq\lq{\sl  X202240$\_$at}\rq\rq]{
\centering
{
\begin{minipage}[t]{0.28\linewidth}
\centering
\centerline{\includegraphics[width=0.95\textwidth]{./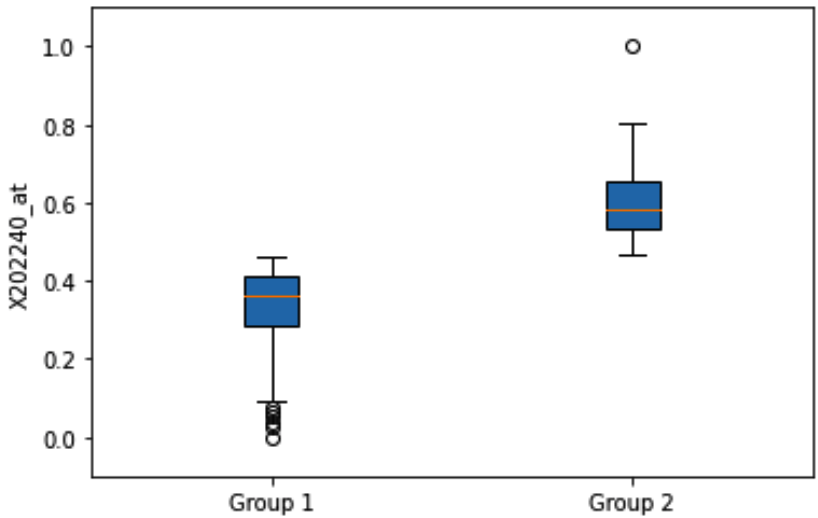}}
\end{minipage}%
}%

}%
\hspace{0.1in}
\subfigure[Two groups of \lq\lq{\sl  X203306$\_$s$\_$at}\rq\rq]{
\centering
{
\begin{minipage}[t]{0.28\linewidth}
\centering
\centerline{\includegraphics[width=0.95\textwidth]{./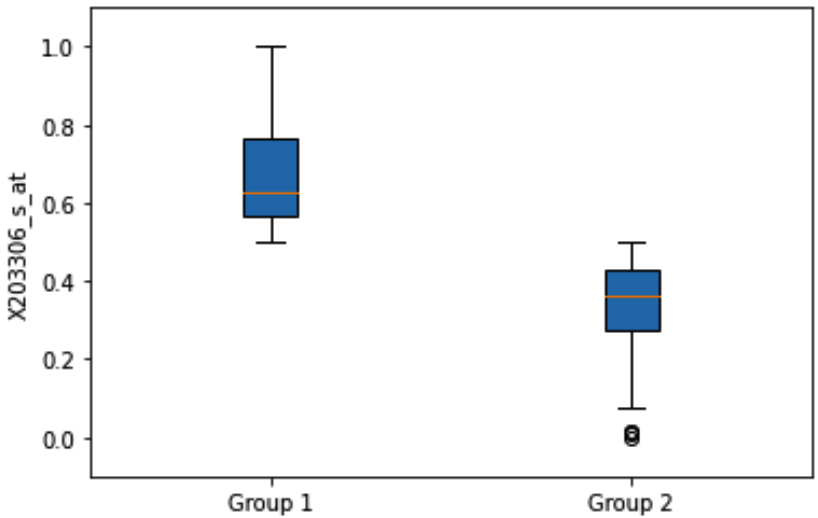}}
\end{minipage}%
}%
}%
\end{center}
\vskip -0.15in
\caption{KM survival curves and pairwise groups for the top 3 variables selected by Max$_{35}$-DeepSurv on the breast cancer dataset. The shaded areas in (a)-(c) show the 95\% confidence intervals; the orange lines in (d)-(f) show the medians.}
\label{Breast_p_values}
\vskip -0.2in
\end{figure}

\newpage

\section{Different Methods on SUPPORT}

\begin{figure}[!htbp]
\vskip -0.15in
\begin{center}
\subfigure[CPH vs. Max$_k$-CPH]{
\centering
{
\begin{minipage}[t]{0.28\linewidth}
\centering
\centerline{\includegraphics[width=0.95\textwidth]{./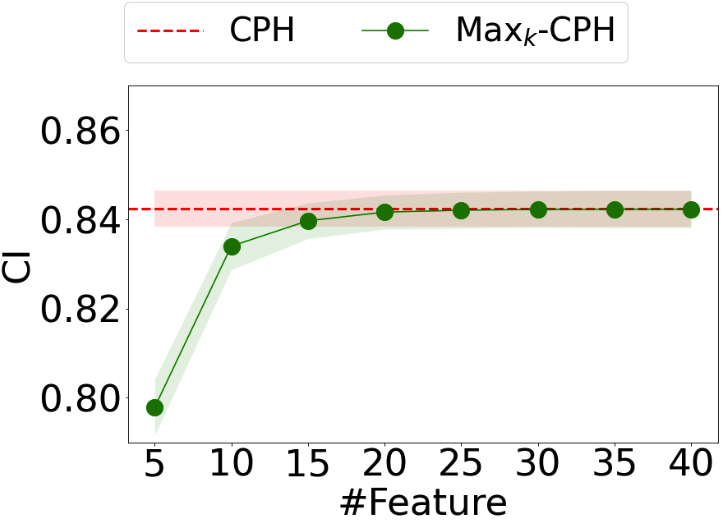}}
\end{minipage}%
}%
}%
\hspace{0.1in}
\subfigure[Cox-nnet vs. Max$_k$-Cox-nnet]{
\centering
{
\begin{minipage}[t]{0.28\linewidth}
\centering
\centerline{\includegraphics[width=0.95\textwidth]{./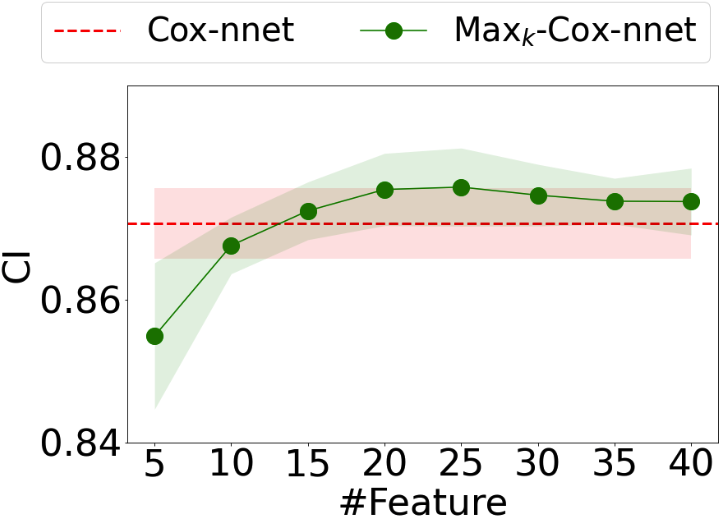}}
\end{minipage}%
}%

}%
\hspace{0.1in}
\subfigure[DeepSurv vs. Max$_k$-DeepSurv]{
\centering
{
\begin{minipage}[t]{0.28\linewidth}
\centering
\centerline{\includegraphics[width=0.95\textwidth]{./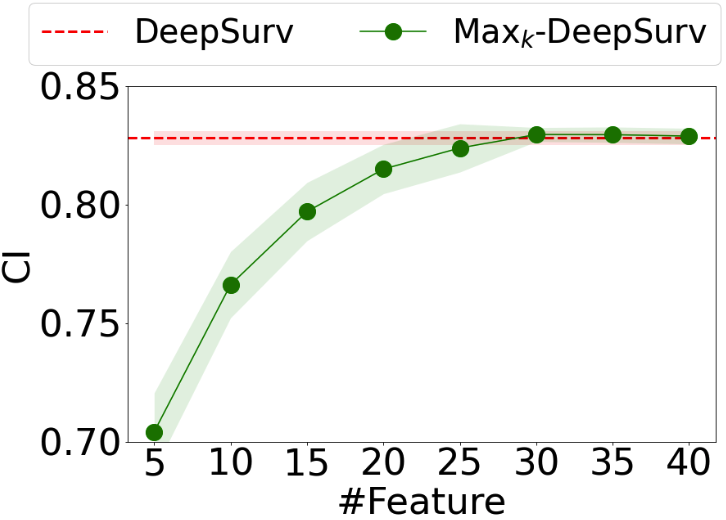}}
\end{minipage}%
}%
}%

\subfigure[CPH vs. Max$_{35}$-CPH]{
\centering
{
\begin{minipage}[t]{0.28\linewidth}
\centering
\centerline{\includegraphics[width=0.95\textwidth]{./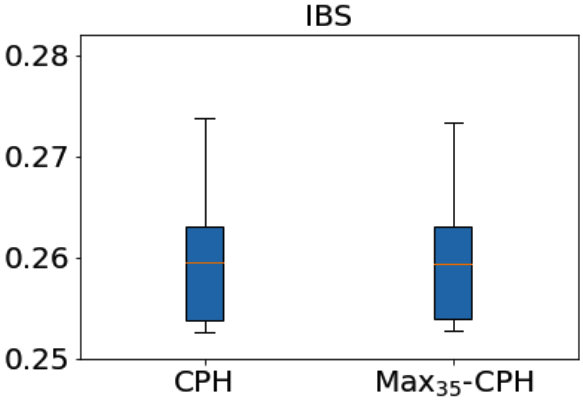}}
\end{minipage}%
}%
}%
\hspace{0.1in}
\subfigure[Cox-nnet vs. Max$_{25}$-Cox-nnet]{
\centering
{
\begin{minipage}[t]{0.28\linewidth}
\centering
\centerline{\includegraphics[width=0.95\textwidth]{./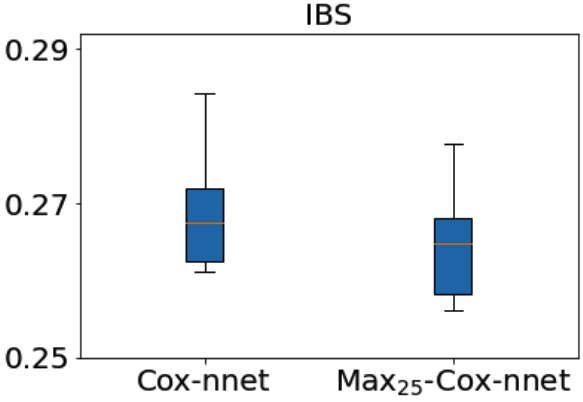}}
\end{minipage}%
}%

}%
\hspace{0.1in}
\subfigure[DeepSurv vs. Max$_{30}$-DeepSurv]{
\centering
{
\begin{minipage}[t]{0.28\linewidth}
\centering
\centerline{\includegraphics[width=0.95\textwidth]{./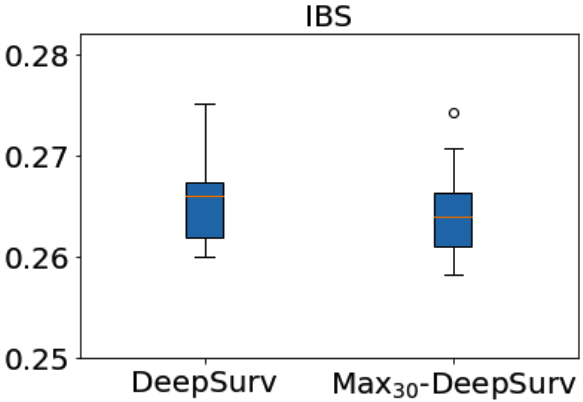}}
\end{minipage}%
}%
}%
\end{center}
\vskip -0.15in
\caption{Prediction performance, in CI (higher, better) and IBS (lower, better), of different methods on SUPPORT. The shaded areas in (a)-(c) show the fluctuations in standard error; the orange lines in (d)-(f) show the medians. For (a)-(c), the horizontal axis is only meaningful for the EXCEL-extended models.}
\label{SUPPORT_CI_IBS}
\vskip -0.1in
\end{figure}

\begin{figure}[!htbp]
\vskip -0.15in
\begin{center}
\subfigure[KM curves for \lq\lq slos\rq\rq]{
\centering
{
\begin{minipage}[t]{0.28\linewidth}
\centering
\centerline{\includegraphics[width=0.95\textwidth]{./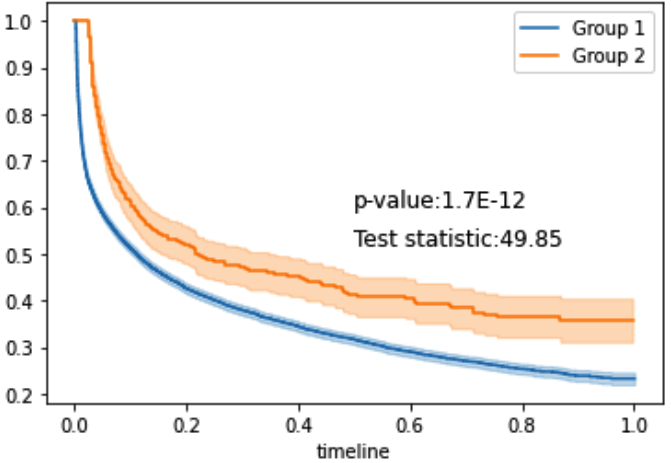}}
\end{minipage}%
}%
}%
\hspace{0.1in}
\subfigure[KM curves for \lq\lq sfdm2$\_$$<$2 mo. follow-up\rq\rq]{
\centering
{
\begin{minipage}[t]{0.28\linewidth}
\centering
\centerline{\includegraphics[width=0.95\textwidth]{./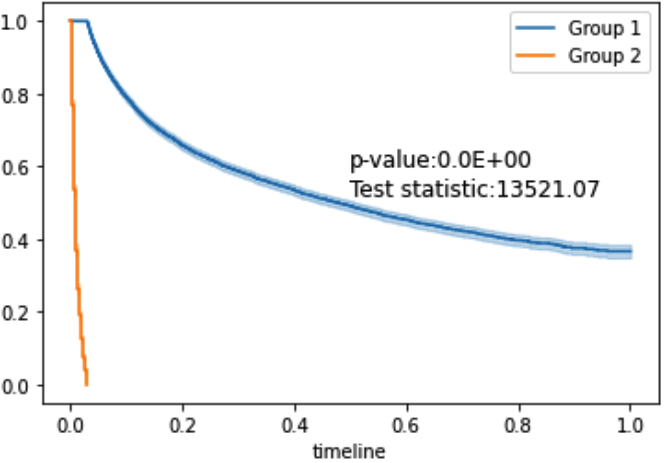}}
\end{minipage}%
}%

}%
\hspace{0.1in}
\subfigure[KM curves for \lq\lq sex$\_$female\rq\rq]{
\centering
{
\begin{minipage}[t]{0.28\linewidth}
\centering
\centerline{\includegraphics[width=0.95\textwidth]{./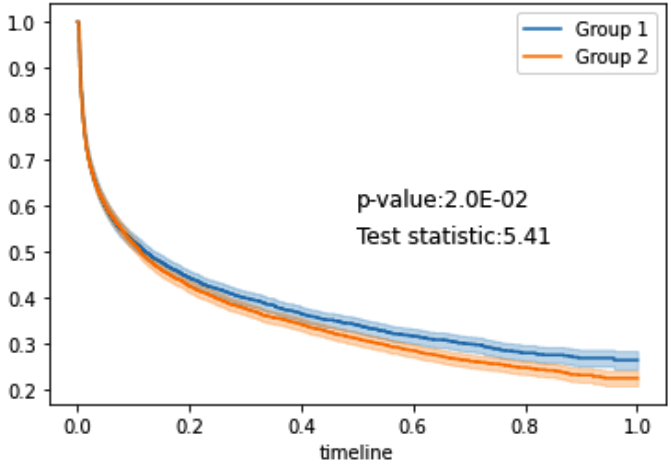}}
\end{minipage}%
}%
}%

\subfigure[Two groups of \lq\lq slos\rq\rq]{
\centering
{
\begin{minipage}[t]{0.28\linewidth}
\centering
\centerline{\includegraphics[width=0.95\textwidth]{./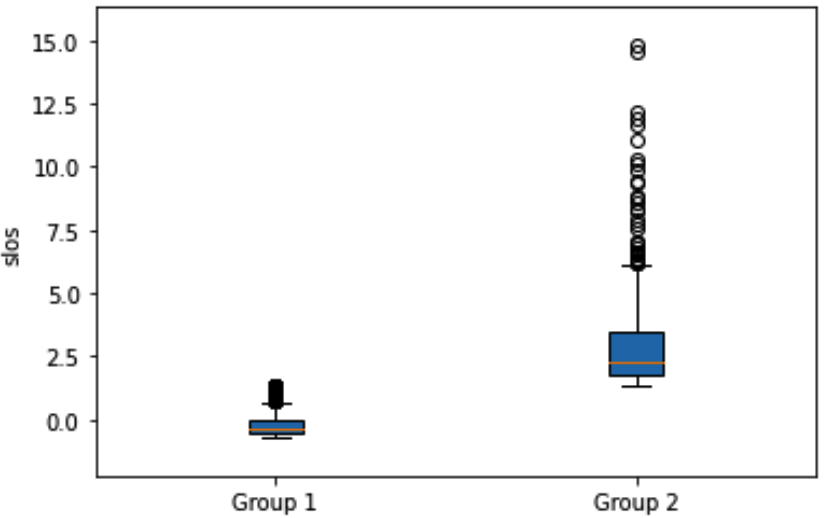}}
\end{minipage}%
}%
}%
\hspace{0.1in}
\subfigure[Two groups of \lq\lq sfdm2$\_$$<$2 mo. follow-up\rq\rq]{
\centering
{
\begin{minipage}[t]{0.28\linewidth}
\centering
\centerline{\includegraphics[width=0.95\textwidth]{./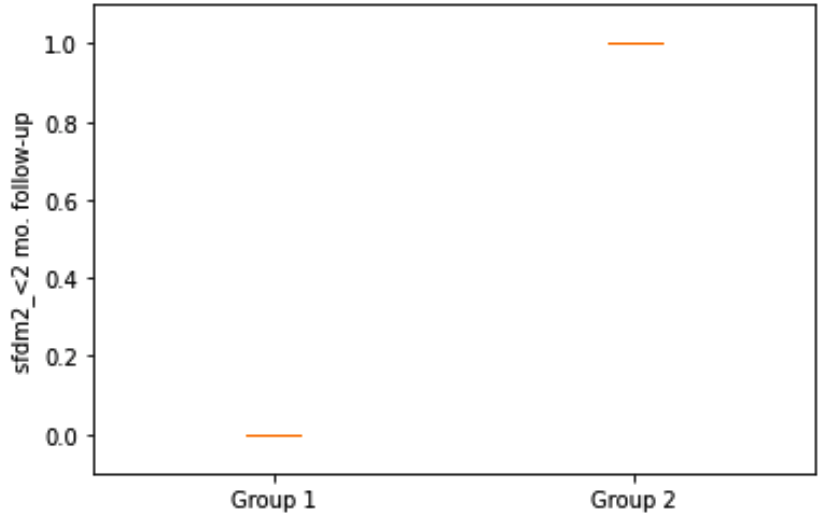}}
\end{minipage}%
}%

}%
\hspace{0.1in}
\subfigure[Two groups of \lq\lq sex$\_$female\rq\rq]{
\centering
{
\begin{minipage}[t]{0.28\linewidth}
\centering
\centerline{\includegraphics[width=0.95\textwidth]{./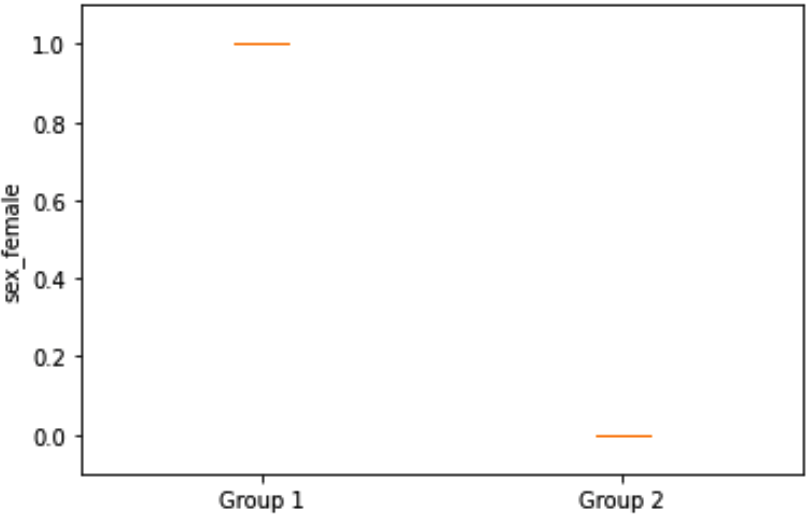}}
\end{minipage}%
}%
}%
\end{center}
\vskip -0.15in
\caption{KM survival curves and pairwise groups for the top 3 variables selected by Max$_{25}$-Cox-nnet on SUPPORT. The shaded areas in (a)-(c) show the 95\% confidence intervals; the orange lines in (d)-(f) show the medians.}
\label{Support_p_values}
\vskip -0.1in
\end{figure}

\newpage

\section{Different Methods on GBM}

\begin{figure}[!htbp]
\vskip -0.2in
\begin{center}
\subfigure[CPH vs. Max$_k$-CPH]{
\centering
{
\begin{minipage}[t]{0.3\linewidth}
\centering
\centerline{\includegraphics[width=0.95\textwidth]{./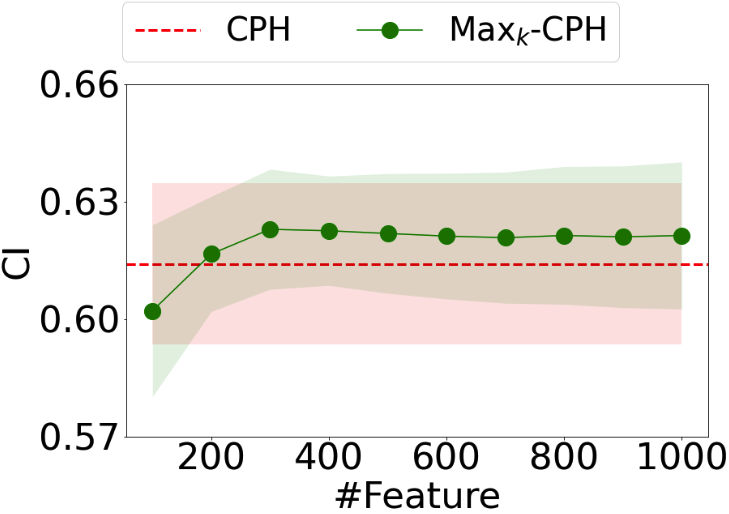}}
\end{minipage}%
}%
}%
\hspace{0.05in}
\subfigure[Cox-nnet vs. Max$_k$-Cox-nnet]{
\centering
{
\begin{minipage}[t]{0.3\linewidth}
\centering
\centerline{\includegraphics[width=0.95\textwidth]{./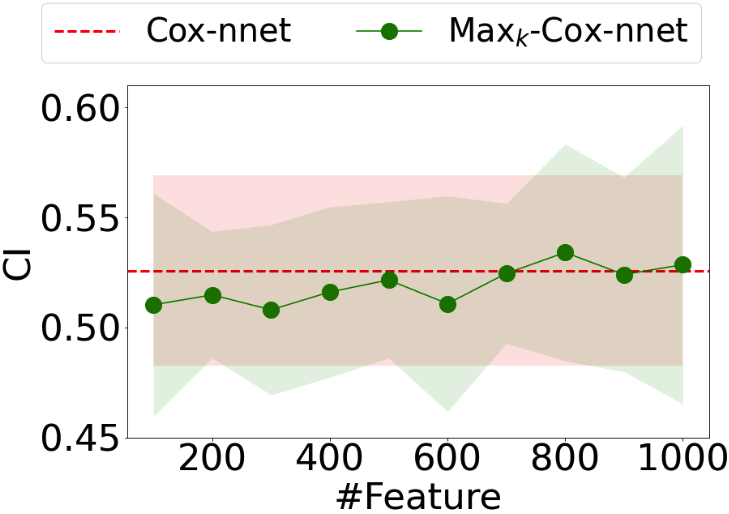}}
\end{minipage}%
}%

}%
\hspace{0.05in}
\subfigure[DeepSurv vs. Max$_k$-DeepSurv]{
\centering
{
\begin{minipage}[t]{0.3\linewidth}
\centering
\centerline{\includegraphics[width=0.95\textwidth]{./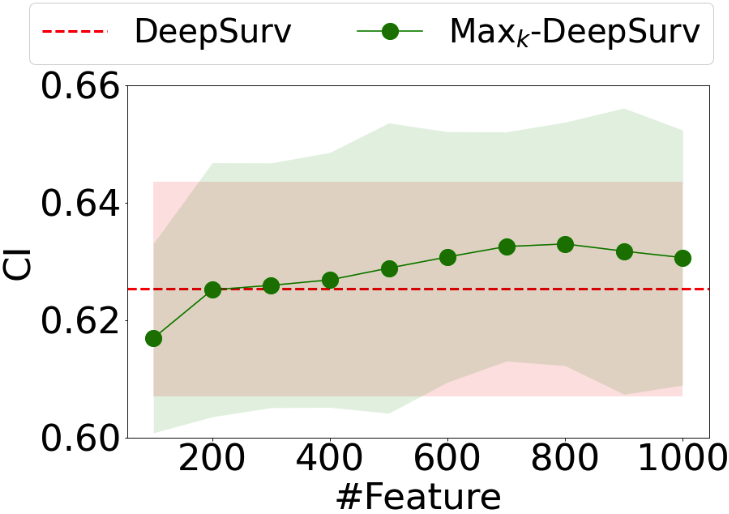}}
\end{minipage}%
}%
}%

\subfigure[CPH vs. Max$_{300}$-CPH]{
\centering
{
\begin{minipage}[t]{0.3\linewidth}
\centering
\centerline{\includegraphics[width=0.95\textwidth]{./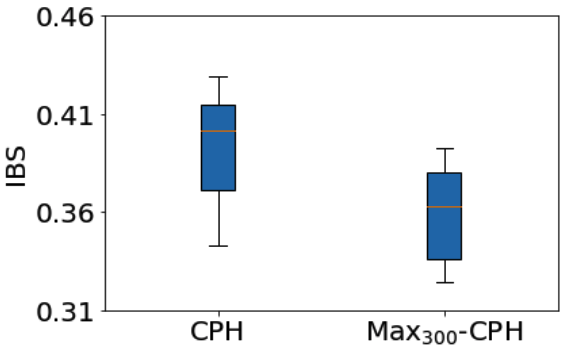}}
\end{minipage}%
}%
}%
\hspace{0.05in}
\subfigure[Cox-nnet vs. Max$_{800}$-Cox-nnet]{
\centering
{
\begin{minipage}[t]{0.3\linewidth}
\centering
\centerline{\includegraphics[width=0.95\textwidth]{./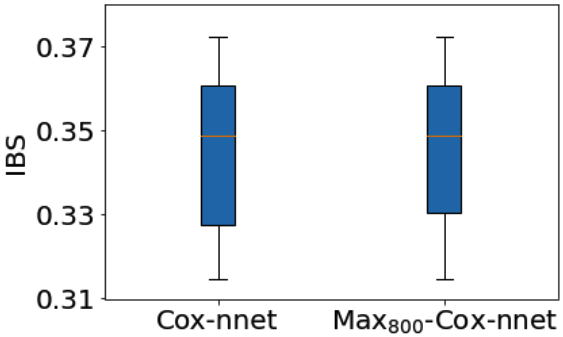}}
\end{minipage}%
}%

}%
\hspace{0.05in}
\subfigure[DeepSurv vs. Max$_{800}$-DeepSurv]{
\centering
{
\begin{minipage}[t]{0.3\linewidth}
\centering
\centerline{\includegraphics[width=0.95\textwidth]{./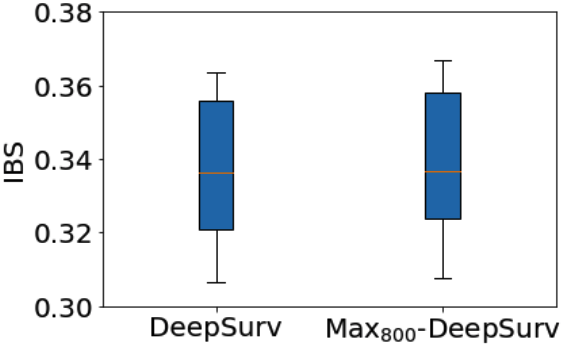}}
\end{minipage}%
}%
}%

\subfigure[PASNet vs. Max$_k$-PASNet]{
\centering
{
\begin{minipage}[t]{0.4\linewidth}
\centering
\centerline{\includegraphics[width=0.7\textwidth]{./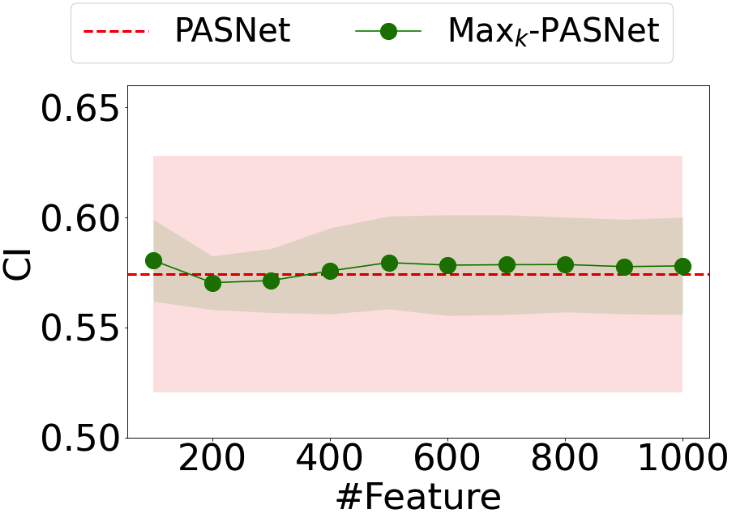}}
\end{minipage}%
}%
}%
\hspace{0.5in}
\subfigure[PASNet vs. Max$_{100}$-PASNet]{
\centering
{
\begin{minipage}[t]{0.4\linewidth}
\centering
\centerline{\includegraphics[width=0.7\textwidth]{./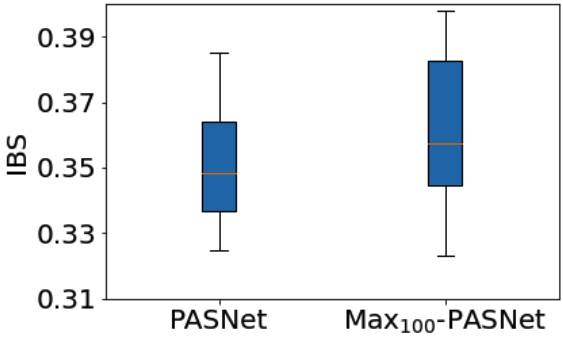}}
\end{minipage}%
}%

}%
\end{center}
\vskip -0.2in
\caption{Prediction performance, in CI (higher, better) and IBS (lower, better), of different methods on GBM. The shaded areas in (a)-(c) and (g) show the fluctuations in standard error; the orange lines in (d)-(f) and (h) show the medians. For (a)-(c) and (g), the horizontal axis is only meaningful for the EXCEL-extended models.}
\label{GBM_CI_IBS}
\vskip -0.1in
\end{figure}

\begin{figure}[!htbp]
\vskip -0.1in
\begin{center}
\subfigure[KM curves for \lq\lq AGE\rq\rq]{
\centering
{
\begin{minipage}[t]{0.28\linewidth}
\centering
\centerline{\includegraphics[width=0.95\textwidth]{./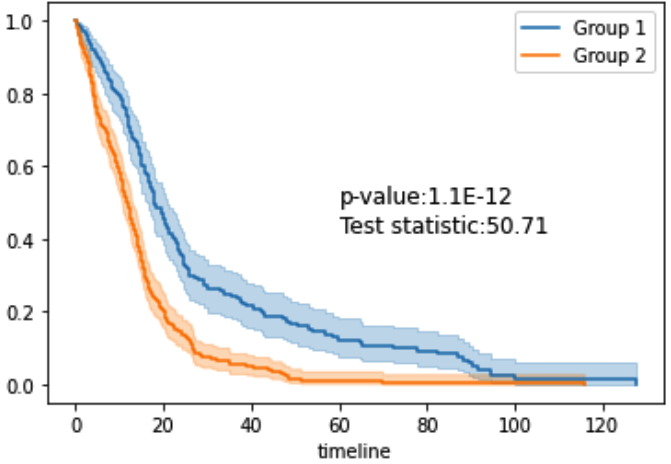}}
\end{minipage}%
}%
}%
\hspace{0.1in}
\subfigure[KM curves for \lq\lq{\sl HIST3H2A}\rq\rq]{
\centering
{
\begin{minipage}[t]{0.28\linewidth}
\centering
\centerline{\includegraphics[width=0.95\textwidth]{./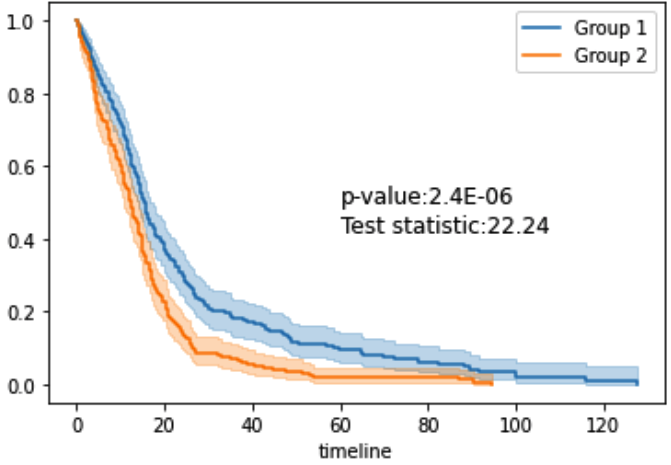}}
\end{minipage}%
}%

}%
\hspace{0.1in}
\subfigure[KM curves for \lq\lq{\sl PRODH}\rq\rq]{
\centering
{
\begin{minipage}[t]{0.28\linewidth}
\centering
\centerline{\includegraphics[width=0.95\textwidth]{./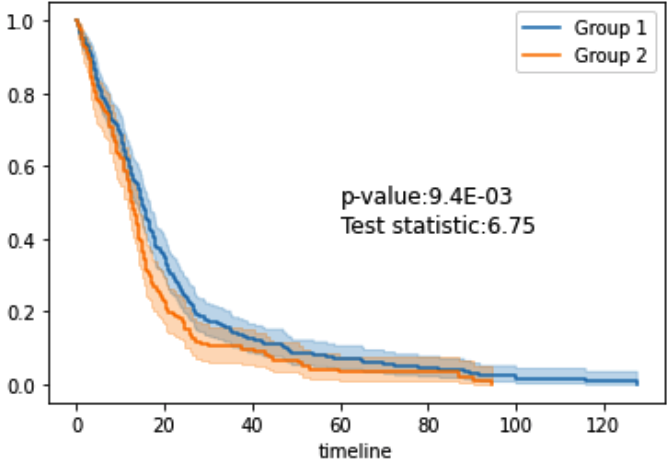}}
\end{minipage}%
}%
}%

\subfigure[Two groups of \lq\lq AGE\rq\rq]{
\centering
{
\begin{minipage}[t]{0.28\linewidth}
\centering
\centerline{\includegraphics[width=0.95\textwidth]{./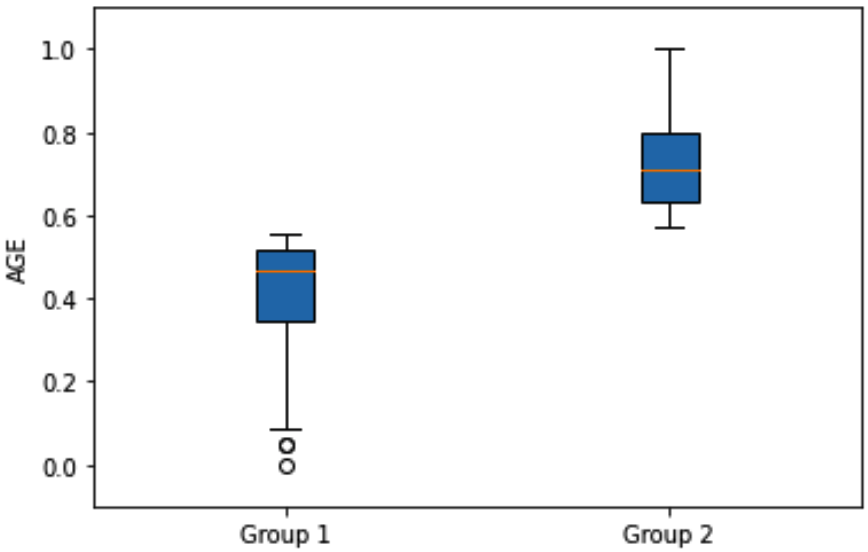}}
\end{minipage}%
}%
}%
\hspace{0.1in}
\subfigure[Two groups of \lq\lq{\sl HIST3H2A}\rq\rq]{
\centering
{
\begin{minipage}[t]{0.28\linewidth}
\centering
\centerline{\includegraphics[width=0.95\textwidth]{./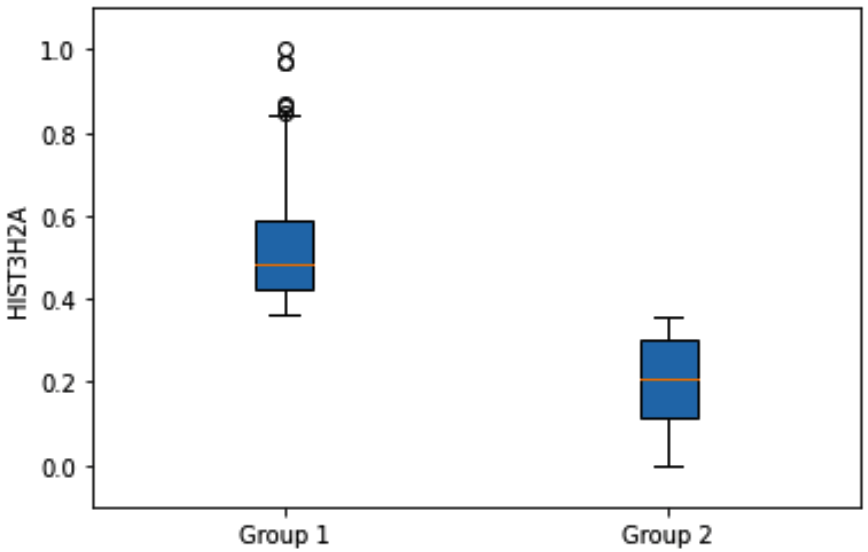}}
\end{minipage}%
}%

}%
\hspace{0.1in}
\subfigure[Two groups of \lq\lq{\sl PRODH}\rq\rq]{
\centering
{
\begin{minipage}[t]{0.28\linewidth}
\centering
\centerline{\includegraphics[width=0.95\textwidth]{./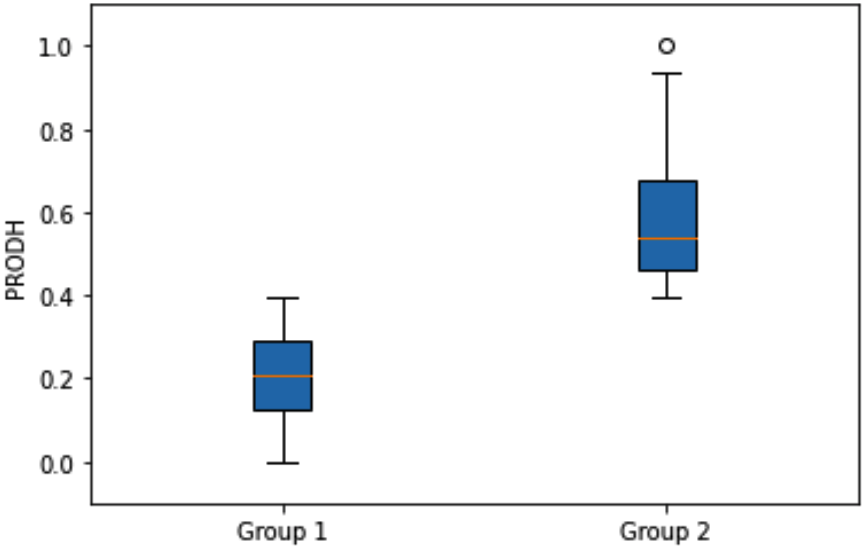}}
\end{minipage}%
}%
}%
\end{center}
\vskip -0.15in
\caption{KM survival curves and pairwise groups for the top 3 variables selected by Max$_{800}$-DeepSurv on GBM. The shaded areas in (a)-(c) show the 95\% confidence intervals; the orange lines in (d)-(f) show the medians.}
\label{GBM_p_values}
\vskip -0.1in
\end{figure}

\newpage
\quad\\
\newpage

\section{Details of Datasets WHAS500 and SUPPORT}
\begin{table}[!hbpt]
\centering
\caption{WHAS500.}
\begin{small}
\begin{tabular}{|l|l|l|}
\hline
{\bf Variable} & {\bf Description} & {\bf Codes / Values}   \\
\hline
afb &  Atrial Fibrillation &  0 = No, 1 = Yes   \\
\hline
age &  Age at Hospital Admission &  Years   \\
 \hline
av3 &Complete Heart Block		&	0 = No, 1 = Yes \\
 \hline
bmi  & Body Mass Index		&	 	kg/m$^2$  \\
 \hline
chf &  Congestive Heart Complications	&	0 = No, 1 = Yes \\
 \hline
cvd &  History of Cardiovascular Disease &	0 = No, 1 = Yes	 \\
 \hline
diasbp &  Initial Diastolic Blood Pressure &	mmHg   \\
 \hline
gender  & Gender 			&		0 = Male, 1 = Female    \\
\hline
hr & Initial Heart Rate		&	Beats per minute  \\
 \hline
los & Length of Hospital Stay		&	Days from Hospital Admission to Hospital Discharge   \\
 \hline
miord  & MI Order 			&	0 = First, 1 = Recurrent   \\
 \hline
mitype & MI Type 		&		0 = non Q-wave, 1 = Q-wave  \\
 \hline
sho &  Cardiogenic Shock	&		0 = No, 1 = Yes	    \\
 \hline
sysbp & Initial Systolic Blood Pressure &	mmHg \\
 \hline
\end{tabular}
\end{small}
\label{WHAS_details}
\end{table}

\newpage

\begin{table}[!hbpt]
\centering
\caption{SUPPORT.}
\begin{small}
\begin{tabular}{|l|l|l|}
\hline
{\bf Variable} & {\bf Description} & {\bf Codes / Values}   \\
\hline
slos  & Days from Study Entry to Discharge   &   Days \\
 \hline
sfdm2$\_$$<$2 mo. follow-up  & Patient died before 2 months after study entry   &  1= 2 mo. follow-up   \\
 \hline
sex$\_$female  &  $\backslash$  &   1 = Female  \\
 \hline
sex$\_$male  &   $\backslash$ &   1 = male  \\
 \hline
ca$\_$metastatic  &  Cancer status  & 1= metastatic   \\
 \hline
\multirow{3}*{sfdm2$\_$no (M2 and SIP pres)}  &  Patient lived 2 months to be able to get 2 month &  \multirow{2}*{1=no}   \\
& interview, and from this interview there were no   & \multirow{2}*{(M2 and SIP pres)}\\
& signs of moderate to severe functional disability   & \\
 \hline
\multirow{4}*{sfdm2$\_$adl$>=$4 ($>=$5 if sur)}  &  Patient was unable to do 4 or more activities of  & \multirow{3}*{1= adl$>=$4}   \\
&  daily living at month 2  after study entry. If the  &\multirow{3}*{($>=$5 if sur)}\\
&  patient was not interviewed but the patient's &\\
&  surrogate was, the cutoff for disability was ADL &\\
 \hline
avtisst  & Average TISS, Days 3-25   &   Days  \\
 \hline
dzgroup$\_$Lung Cancer  &  The current group is lung cancer  &  1=Lung Cancer   \\
 \hline
sfdm2$\_$SIP$>=$30  &  Sickness Impact Profile total score at 2 months  &  1=SIP$>=$30 \\
 \hline
\end{tabular}
\end{small}
\label{SUPPORT_details}
\end{table}

Here, for more details of SUPPORT, it can be found in the links \href{https://hbiostat.org/data/repo/Csupport2.html}{https://hbiostat.org/data /repo/Csupport2.html} and \href{https://biostat.app.vumc.org/wiki/Main/SupportDesc}{https://biostat.app.vumc.org/wiki/Main/SupportDesc}

\newpage
\section*{References}
\bibliography{ref}

\end{document}